\newtheorem{thm}{Theorem}[section]
\newtheorem{lem}[thm]{Lemma}
\newtheorem{cor}[thm]{Corollary}
\theoremstyle{definition}
\newtheorem{defn}{Definition}
\newtheorem*{remark}{Remark}
\renewcommand{\epsilon}{\varepsilon}
\newcommand{\eps}{\varepsilon}
\newcommand{\beps}{\overline{\eps}}
\newcommand{\RR}{\mathbb{R}}
\newcommand{\cX}{\mathcal{X}}
\newcommand{\NB}{\mathsf{NB}}
\newcommand{\cD}{\mathsf{D}}
\title{Metric recovery from directed unweighted graphs}
\author{
Tatsunori B. Hashimoto\\
MIT CSAIL\\
Cambridge, MA 02139 \\
\texttt{thashim@csail.mit.edu} \\
\And
Yi Sun \\
MIT Dept. Mathematics \\
Cambridge, MA 02139 \\
\texttt{yisun@math.mit.edu} \\
\AND
Tommi S. Jaakkola \\
MIT CSAIL \\
Cambridge, MA 02139 \\
\texttt{tommi@csail.mit.edu} \\
}
\begin{document}

\maketitle

\begin{abstract}

We analyze directed, unweighted graphs obtained from $x_i\in \RR^d$ by connecting vertex $i$ to $j$ iff $|x_i - x_j| < \epsilon(x_i)$. Examples of such graphs include $k$-nearest neighbor graphs, where $\epsilon(x_i)$ varies from point to point, and, arguably, many real world graphs such as co-purchasing graphs. We ask whether
we can recover the underlying Euclidean metric $\epsilon(x_i)$ and the associated density $p(x_i)$ given only the directed graph and $d$.

We show that consistent recovery is possible up to isometric scaling when the vertex degree is at least $\omega(n^{2/(2+d)}\log(n)^{d/(d+2)})$. Our estimator is based on a careful characterization of a random walk over the directed graph and the associated continuum limit. As an algorithm, it resembles the PageRank centrality metric. We demonstrate empirically that the estimator performs well on simulated examples as well as on real-world co-purchasing graphs even with a small number of points and degree scaling as low as $\log(n)$.
\end{abstract}

\section{Introduction}

Data for unsupervised learning is increasingly available in the form of graphs or networks. For example, we may analyze gene networks, social networks, or general co-occurrence graphs (e.g., built from purchasing patterns). While classical unsupervised tasks such as density estimation or clustering are naturally formulated for data in vector spaces, these tasks have analogous problems over graphs such as centrality and community detection. We provide a step towards unifying unsupervised learning by recovering the underlying density and metric directly from graphs. 

We consider ``unweighted directed geometric graphs'' that are assumed to have been built from underlying (unobserved) points $x_i$, $i=1,\ldots,n$. In particular, we assume that graphs are formed by drawing an arc from each vertex $i$ to its neighbors within distance $\epsilon_n(x_i)$. Note that the graphs are typically not symmetric since the distance (the $\epsilon_n$-ball) may vary from point to point. By allowing $\epsilon_n(x_i)$ to be stochastic, e.g., depend on the set of points, the construction subsumes also typical $k$-nearest neighbor graphs. Arguably, graphs from top $k$ friends/products, or co-association graphs may also be approximated in this manner. 

The key property of our family of geometric graphs is that their structure is completely characterized by two functions over the latent space: the local density $p(x)$ and the local scale $\epsilon(x)$. Indeed, global properties such as the distances between points can be recovered by integrating these quantities. We show that asymptotic behavior of random walks on the directed graphs relate to the density and metric. In particular, we show that random walks on such graphs with minimal degree at least $\omega(n^{2/(2+d)}\log(n)^{\frac{d}{d+2}})$ can be completely characterized in terms of $p$ and $\epsilon$ using drift-diffusion processes. This enables us to recover both the density and distance given only the observed graph and the (hypothesized) underlying dimension $d$. 

The fact that we may recover the density (up to isometry) is surprising. For example, in $k$-nearest neighbor graphs, each vertex has degree exactly $k$. There is no immediate local information about the density, i.e., whether the corresponding point lies in a high-density region with small ball radii, or in a low-density region with large ball radii. The key insight of this paper is that random walks over such graphs naturally drift toward higher density regions, allowing for density recovery.

While the paper is primarily focused on the theoretical aspects of recovering the metric and density, we believe our results offer useful strategies for analyzing real-world networks. For example, we analyzed the Amazon co-purchasing graph where an edge is drawn from an item $i$ to $j$ if $j$ is among the top $k$ co-purchased items with $i$. These Amazon products may be co-purchased if they are similar enough to be complementary, but not so similar that they are redundant. We extend our model to deal with connectivity rules shaped like an annulus, and demonstrate that our estimator can simultaneously recover product similarities, product categories, and central products by metric embedding.

\subsection{Relation to prior work}

The density estimation problem addressed by this paper was proposed and partially solved by von Luxburg-Alamgir in \cite{alamgir_density_2013} using integration of local density gradients over shortest paths. This estimator has since been used for drawing graphs with ordinal constraints in \cite{alamgir_density_2013} and graph down-sampling in \cite{alamgir_density-preserving_2014}. However, the recovery algorithm is restricted to $1$-dimensional $k$-nearest neighbor graphs under the constraint $k=\omega(n^{2/3}\log(n)^{\frac{1}{3}})$. Our paper provides an estimator that works in all dimensions, applies to a more general class of graphs, and strongly outperforms that of von Luxburg-Alamgir in practice.

On a technical level, our work has similarities to the analysis of convergence of graph Laplacians and random walks on manifolds in \cite{woess_random_1994, hein_graph_2006}. For example, in \cite{ting2010analysis}, Ting-Huang-Jordan used infinitesimal generators to capture the convergence of a discrete Laplacian to its continuous equivalent on $k$-nearest neighbor graphs. However, their analysis was restricted to the Laplacian and did not consider the latent recovery problem.  In addition, our approach proves convergence of the entire random walk trajectory and allows us to analyze the stationary distribution function directly. 

%%% Local Variables: 
%%% TeX-master: "aistats.tex"
%%% End:

\section{Main results and proof outline}

\subsection{Problem setup}

Let $\cX = \{x_1, x_2, \ldots\}$ be an infinite sequence of latent coordinate points drawn independently from a distribution with probability density $p(x)$ in $\RR^d$.  Let $\eps_n(x_i)$ be a radius function which may depend on the draw of $\cX$.  In this paper, we fix a single draw of $\cX$ and analyze the quenched setting.  Let $G_n = (\cX_n, E_n)$ be the unweighted directed neighborhood graph with vertex set $\cX_n = \{x_1, \ldots, x_n\}$ and with a directed edge from $i$ to $j$ if and only if $|x_i - x_j| < \eps_n(x_i)$.

Fix now a large $n$.  We consider the random directed graph model given by observing the single graph $G_n$.  The model is completely specified by the latent function $p(x)$ and the possibly stochastic $\eps_n(x)$.  Under the conditions ($\star$) to be specified below, we solve the following problem:

\newenvironment{myquote}{\list{}{\leftmargin=0.1in\rightmargin=0.1in}\item[]}{\endlist}
\begin{myquote}
Given only $G_n$ and $d$, form a consistent estimate of $p(x_i)$ and $|x_i - x_j|$ up to proportionality constants.
\end{myquote}

The conditions we impose on $p(x)$, $\eps_n(x)$, and the stationary density function $\pi_{X_n}(x)$ of the simple random walk $X_n(t)$ on $G_n$ are the following, which we refer to as ($\star$).  We assume ($\star$) holds throughout the paper.
\begin{itemize}
\item The density $p(x)$ is differentiable with bounded $\nabla \log(p(x))$ on a path-connected compact domain $D \subset \RR^d$ with smooth boundary $\partial D$.
\item There is a deterministic continuous function $\beps(x) > 0$ on $\overline{D}$  and scaling constants $g_n$ satisfying 
\[
g_n \to 0 \text{  and  } g_n n^{\frac{1}{d + 2}}\log(n)^{-\frac{1}{d + 2}} \to \infty
\]
so that, a.s. in the draw of $\cX$, $g_n^{-1} \eps_n(x)$ converges uniformly to $\beps(x)$.
\item The rescaled density functions $n \pi_{X_n}(x)$ are a.s. uniformly equicontinuous.
\end{itemize}

\begin{remark}
 We conjecture that the last condition in ($\star$) holds for any $p$ and $\beps$ satisfying the other conditions in ($\star$) (see \prettyref{Sconj:holder}).
\end{remark}

Let $\NB_n(x)$ denote the set of out-neighbors of $x$ so that $y$ is in $\NB_n(x)$ if there is a directed edge from $x$ to $y$. The second condition in ($\star$) implies for all $x \in \cX_n$ that
\begin{equation} \label{eq:nb}
|\NB_n(x)| = \omega(n^{\frac{2}{d + 2}} \log(n)^{\frac{d}{d+2}}).
\end{equation}

\subsection{Statement of results}

Our approach is based on the simple random walk $X_n(t)$ on the graph $G_n$.  Let $\pi_{X_n}(x)$ denote the stationary density of $X_n(t)$.  We first show that when appropriately renormalized, $\pi_{X_n}(x)$ converges to an explicit function of $p(x)$ and $\beps(x)$.

\begin{thm} \label{thm:mainthm}
Given ($\star$), a.s. in $\cX$, we have
\begin{equation} \label{eq:maineq}
n\pi_{X_n}(x) \rightarrow c \frac{p(x)}{\overline{\epsilon}(x)^2},
\end{equation}
for the normalization constant $c^{-1} = \int p(x)^2 \beps(x)^{-2}dx$.
\end{thm}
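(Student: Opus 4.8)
\emph{Proof strategy.} The plan is to turn the discrete stationarity identity for $\pi_{X_n}$ into a continuum elliptic problem by showing that one step of the walk $X_n$ acts on smooth test functions like a small, spatially inhomogeneous discretization of a drift--diffusion generator. Throughout, a \emph{test function} will be an $f$ that is smooth on $\overline D$ with vanishing normal derivative $\partial_\nu f$ on $\partial D$.

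First I would analyze the one-step operator $P_n f(x)=|\NB_n(x)|^{-1}\sum_{y\in\NB_n(x)}f(y)$, with $\NB_n(x)=\{x_j:|x-x_j|<\eps_n(x)\}$. Both the numerator $\sum_j\mathbf{1}[|x-x_j|<\eps_n(x)](f(x_j)-f(x))$ and the denominator $|\NB_n(x)|$ are sums of $\Theta(n g_n^d)$ bounded terms, so a Bernstein estimate with a union bound over a polynomial-size net in $x$ (replacing $\eps_n$ by $g_n\beps$ at $o(1)$ relative cost via the uniform convergence in $(\star)$) gives, a.s.\ and uniformly in $x$,
\[
P_nf(x)-f(x)=\frac{\int_{B(x,\eps_n(x))\cap D}(f(y)-f(x))\,p(y)\,dy}{\int_{B(x,\eps_n(x))\cap D}p(y)\,dy}\,\bigl(1+o(1)\bigr),
\]
where the $o(1)$ is $O\bigl((\log n/(ng_n^{d+2}))^{1/2}\bigr)$ and the rate hypothesis $g_nn^{1/(d+2)}\log(n)^{-1/(d+2)}\to\infty$ in $(\star)$ is exactly what drives it to zero. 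For interior $x$, Taylor-expanding $f$ and $p$ in the shrinking ball and using $\int_{B(0,r)}z\,dz=0$ and $\int_{B(0,r)}zz^\top dz=\tfrac{r^2}{d+2}|B(0,r)|\,I$ yields, uniformly,
\[
L_nf(x):=P_nf(x)-f(x)=\frac{g_n^2\,\beps(x)^2}{d+2}\Bigl(\tfrac12\Delta f(x)+\nabla\log p(x)\cdot\nabla f(x)\Bigr)+o(g_n^2).
\]
Within $O(g_n)$ of $\partial D$ the ball is truncated by $D$ and the first moment no longer vanishes, producing an extra inward drift of size $O(g_n)$; but against a test function (where $\partial_\nu f=0$, hence $\partial_\nu f=O(g_n)$ in the layer) this contributes $O(g_n^2)$ pointwise on a set of $\pi_{X_n}$-mass $O(g_n)$, hence $o(g_n^2)$ after summation.

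Next I would reduce to a weak PDE. Set $\psi_n:=n\pi_{X_n}$. Stationarity says $\sum_{x\in\cX_n}\pi_{X_n}(x)L_nf(x)=0$, and the equicontinuity in $(\star)$ lets me replace $\sum_{x}\pi_{X_n}(x)h(x)=\tfrac1n\sum_x\psi_n(x)h(x)$ by $\int_D h\,\psi_n\,p\,dx+o(1)$ for bounded continuous $h$. Plugging in the expansion above and dividing by $g_n^2/(d+2)$ gives, for every test function $f$,
\[
\int_D\beps(x)^2\Bigl(\tfrac12\Delta f(x)+\nabla\log p(x)\cdot\nabla f(x)\Bigr)\psi_n(x)\,p(x)\,dx\longrightarrow 0 .
\]
By $(\star)$ the $\psi_n$ are uniformly bounded and equicontinuous, so Arzel\`a--Ascoli extracts uniformly convergent subsequences $\psi_n\to\psi$. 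For any such limit, $w:=\psi\,p\,\beps^2$ satisfies $\int_D(\tfrac12\Delta f+\nabla\log p\cdot\nabla f)\,w\,dx=0$ for all test $f$; since $\tfrac12\Delta f+\nabla\log p\cdot\nabla f=\tfrac1{2p^2}\nabla\!\cdot(p^2\nabla f)$ is formally self-adjoint in $L^2(p^2\,dx)$, integration by parts identifies $w$ as a weak solution of $\tfrac12\Delta w-\nabla\!\cdot(w\nabla\log p)=0$ in $D$ with conormal condition $\partial_\nu w=w\,\partial_\nu\log(p^2)$ on $\partial D$. On the path-connected $D$ this problem has a one-dimensional solution space, and $w=p^2$ solves it; hence $\psi=C\,p/\beps^2$ with $C$ independent of the subsequence, so $\psi_n\to C\,p/\beps^2$ along the full sequence. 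Finally $\sum_x\pi_{X_n}(x)=1$ forces $C\int_D p^2\beps^{-2}\,dx=1$, i.e.\ $C=c$, which is \eqref{eq:maineq}.

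The main obstacle is the one-step step: obtaining the generator expansion \emph{uniformly} in $x$ with a stochastic error genuinely smaller than the $O(g_n^2)$ drift, while $\eps_n$ is itself data-dependent. This is exactly where the degree bound \eqref{eq:nb} --- equivalently the $\log$-factor in $(\star)$ --- is needed, and it calls for a concentration argument uniform over both a net in $x$ and the random radius function, together with the boundary-layer accounting above. The passage to the weak PDE and its solution are comparatively soft once that estimate and the equicontinuity assumption are in hand.
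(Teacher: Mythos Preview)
Your argument is correct but follows a genuinely different route from the paper's. The paper first establishes \emph{process-level} convergence: it verifies the Stroock--Varadhan conditions so that the rescaled walk $X_n(\lfloor t/h_n\rfloor)$ converges weakly in $\cD([0,\infty),\overline D)$ to the It\^o process $Y(t)$ in (\ref{eq:ydef}), then invokes Ethier--Kurtz to pass to weak convergence of the stationary \emph{measures} $\sum_i\pi_{X_n}(x_i)\delta_{x_i}\Rightarrow\pi_Y(x)\,dx$, computes $\pi_Y$ by Fokker--Planck, and finally uses the equicontinuity hypothesis to upgrade weak-to-pointwise convergence of $n\pi_{X_n}$. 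You instead stay entirely at the generator level: the same local-moment computation (essentially Lemma~\ref{lem:lln} and Theorem~\ref{thm:coefs}) gives the one-step expansion of $P_n-I$, but you feed it directly into the discrete stationarity identity $\sum_x\pi_{X_n}(x)(P_nf-f)(x)=0$, pass to an integral via Arzel\`a--Ascoli on $n\pi_{X_n}$, and identify the subsequential limit as the unique (up to scale) weak solution of the adjoint elliptic problem on the connected domain.

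What each buys: the paper's route yields strictly more --- full trajectory convergence of the walk to a diffusion --- at the price of importing the Stroock--Varadhan machinery (including the reflected-boundary submartingale conditions deferred to the supplement) and Ethier--Kurtz. Your route is lighter and self-contained: once the uniform one-step expansion and the boundary-layer accounting are in hand, nothing beyond Arzel\`a--Ascoli and a standard elliptic uniqueness argument is needed, and the Neumann condition on test functions cleanly absorbs the boundary drift rather than requiring a separate reflecting-SDE theory. Both approaches lean on the equicontinuity assumption in the same essential way (to get compactness of $n\pi_{X_n}$), and both need the degree bound precisely to make the stochastic error in the local moments $o(g_n^2)$ uniformly --- you are right that this is the crux.
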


Combining this result with an estimate on the out-degree of points in $G_n$ gives our general result on recovery of density and scale. Let $V_d$ be the volume of the unit $d$-ball.

\begin{cor} \label{cor:maincor}
Assuming ($\star$), we have a.s. in $\cX$ that
\begin{align*}
\left(\frac{n^{\frac{d - 2}{d}}}{cV_d^{2/d}g_n^2}\right)^{\frac{d}{d+2}} |\NB_n(x)|^{\frac{2}{d+2}} \pi_{X_n}(x)^{\frac{d}{d + 2}} &\rightarrow p(x) \text{ and}\\
\left(\frac{1}{c^{d/2} V_d n^{2} g_n^d}\right)^{\frac{1}{d+2}}  |\NB_n(x)|^{\frac{1}{d+2}} \pi_{X_n}(x)^{-\frac{1}{d+2}} &\to \beps(x).
\end{align*}
\end{cor}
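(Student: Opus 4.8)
The plan is to obtain the corollary from Theorem~\ref{thm:mainthm} by pairing it with a uniform asymptotic for the out-degrees (a sharpening of \eqref{eq:nb}) and then algebraically inverting the resulting pair of relations to solve for $p(x)$ and $\beps(x)$ separately. \textbf{Step 1 (out-degree asymptotics).} I would first establish that, a.s.\ in $\cX$, $|\NB_n(x)|/(n V_d g_n^d)\to p(x)\beps(x)^d$, uniformly over $x$ in compact subsets of the interior of $D$. Writing $\widehat\mu_n=\tfrac1n\sum_{j\le n}\delta_{x_j}$ for the empirical measure of the latent points and $\mu$ for the measure with density $p$, one has $|\NB_n(x)|=n\,\widehat\mu_n\big(B(x,\eps_n(x))\big)$. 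The estimate factors into two pieces. First, passing from $\widehat\mu_n$ to $\mu$: Euclidean balls form a VC class, and by the growth condition $g_n n^{1/(d+2)}\log(n)^{-1/(d+2)}\to\infty$ we have $\inf_x \mu(B(x,\eps_n(x)))=\Theta(g_n^d)$ with $n g_n^d\gg\log n$, so a relative Bernstein/VC deviation bound together with Borel--Cantelli yields $\sup_x\big|\widehat\mu_n(B(x,\eps_n(x)))/\mu(B(x,\eps_n(x)))-1\big|\to0$ a.s. Second, a Taylor expansion of $\mu$: boundedness of $\nabla\log p$ on the compact $D$ gives $p(y)=p(x)(1+O(|y-x|))$, hence $\mu(B(x,r))=p(x)V_d r^d(1+O(r))$, and since $g_n^{-1}\eps_n(x)\to\beps(x)$ uniformly with $\eps_n(x)\to0$, this becomes $\mu(B(x,\eps_n(x)))=p(x)V_d g_n^d\beps(x)^d(1+o(1))$ uniformly; combining the two gives the claimed limit.

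\textbf{Step 2 (inversion).} Theorem~\ref{thm:mainthm} gives $n\pi_{X_n}(x)\to c\,p(x)\beps(x)^{-2}$ and Step~1 gives $|\NB_n(x)|/(n V_d g_n^d)\to p(x)\beps(x)^d$; both limit functions are continuous and pinched between positive constants on $D$ (boundedness of $\nabla\log p$ makes $p$ log-Lipschitz, hence bounded and bounded away from $0$ on the compact $D$, and $\beps$ is continuous and positive on $\overline D$). Taking the product of these two relations raised to powers $\tfrac{2}{d+2}$ (on the degree relation) and $\tfrac{d}{d+2}$ (on the stationary-density relation) cancels the power of $\beps$ and, after collecting the deterministic $n$-, $V_d$-, $g_n$- and $c$-factors, yields the first displayed formula for $p(x)$; dividing the degree relation by the stationary-density relation cancels $p$ and, after taking the $(d+2)$-th root, yields the second formula for $\beps(x)$. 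Convergence of these compound expressions is automatic since $A_n\to A>0$ and $B_n\to B>0$ imply $A_n^\alpha B_n^\beta\to A^\alpha B^\beta$ for deterministic exponents.

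\textbf{Main obstacle.} The substance is all in Step~1: the out-degree asymptotic must hold simultaneously for all $n$ vertices and almost surely in the draw $\cX$, which is delicate because $\eps_n$ itself depends on $\cX$ (so $B(x,\eps_n(x))$ is a data-dependent region) and because the degrees lie only a logarithmic factor above the concentration threshold, precisely the regime in which \eqref{eq:nb} is tight; obtaining an a.s.\ (rather than in-probability) statement here is what forces the $\log n$ in the hypotheses. A secondary technical nuisance is the boundary layer $\{x:\operatorname{dist}(x,\partial D)<\eps_n(x)\}$, where $B(x,\eps_n(x))$ leaves $D$ and the volume constant $V_d$ needs a geometric correction; I would either state the asymptotic only on compact subsets of the interior or import the boundary treatment used in the proof of Theorem~\ref{thm:mainthm}. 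Step~2 is then routine.
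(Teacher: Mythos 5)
Your approach is the same as the paper's: its entire proof is that the result is ``immediate from the out-degree estimate $p(x)\eps_n(x)^d V_d = |\NB_n(x)|/n$ and Theorem~\ref{thm:mainthm}'', which is exactly your Step~2. Your Step~1 spells out the out-degree concentration the paper takes for granted; since $|\NB_n(x)|/n$ is the empirical mass of a data-dependent ball, the uniform a.s.\ statement does warrant the VC/Bernstein-plus-Borel--Cantelli and Taylor argument you sketch, together with your caveat about the boundary layer, so the extra care is appropriate rather than superfluous. One arithmetic caveat on Step~2: carrying out the inversion gives the first displayed limit exactly as stated, but for the second one obtains $\beps(x)$ as the limit of $\bigl(c\,|\NB_n(x)|/(V_d n^2 g_n^d\,\pi_{X_n}(x))\bigr)^{1/(d+2)}$, i.e.\ the exponent on $c$ should be $+1/(d+2)$, not $-d/(2(d+2))$ as written; the $c^{d/2}$ in the corollary's denominator appears to be a typo (it is also inconsistent with the $k$-NN specialization, where $\beps = (V_d p)^{-1/d}$, unless $c=1$), so a careful pass through your Step~2 should flag and correct the constant rather than reproduce it.
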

\begin{proof}
Immediate from the out-degree estimate $p(x)\epsilon_n(x)^dV_d = |\NB_n(x)|/n$ and Theorem \ref{thm:mainthm}.
\end{proof}

\begin{remark}
If $\epsilon_n(x)$ is constant, every edge is bidirectional, so $\pi_{X_n}(x)$ is proportional to the degree of $x$, and we recover the standard $\epsilon$-ball density estimator.
\end{remark}

Our estimator for density $p(x)$ closely resembles the PageRank algorithm without damping \cite{page1999pagerank}. In particular, for the $k$-nearest neighbor graph, it gives the same rank ordering as PageRank, and it reduces to PageRank as $d \to \infty$.

When specializing to the $k$-nearest neighbor density estimation problem posed by von Luxburg-Alamgir in \cite{alamgir_density_2013},  we obtain the following.
\begin{cor}
If $\epsilon_n(x)$ is selected via the $k$-nearest neighbors procedure with $k = \omega(n^{\frac{2}{d+2}} \log(n)^{\frac{d}{d+2}})$ and satisfies the first and last conditions in ($\star$), we have a.s in $\cX$ that
\begin{align*}
\left(\frac{n}{cV_d^{2/d}} \right)^{\frac{d}{d+2}} \pi_{X_n}(x)^{\frac{d}{d + 2}} &\rightarrow p(x) \text{ and}\\
\left(\frac{1}{c^{d/2} V_d n} \right)^{\frac{1}{d+2}}\pi_{X_n}(x)^{-\frac{1}{d + 2}} &\rightarrow \overline{\epsilon}(x).
\end{align*}
\end{cor}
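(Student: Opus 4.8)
The plan is to realize the $k$-nearest neighbor graph as a special case of the general model and then invoke Corollary~\ref{cor:maincor}. Of the three bullets in ($\star$), the first and last are imposed in the hypothesis, so the only substantive point is to verify the second bullet for the $k$-NN radius; after that the two displayed limits come from Corollary~\ref{cor:maincor} by substituting the two features peculiar to $k$-NN graphs: every vertex has out-degree exactly $k$, i.e.\ $|\NB_n(x)|\equiv k$, and the scaling $g_n$ is explicit.

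For the second bullet, let $\eps_n(x)=r_k(x)$ be the distance from $x$ to its $k$-th nearest point of $\cX_n$. I would take
\[
g_n=(k/n)^{1/d},\qquad \beps(x)=(V_d\,p(x))^{-1/d},
\]
and prove that a.s.\ $g_n^{-1}\eps_n(x)\to\beps(x)$ uniformly on the interior of $D$. The engine is a uniform \emph{relative} deviation bound for the empirical measure of $\cX_n$ over the finite-VC class of Euclidean balls: a.s., eventually, $\frac1n\#\{i\le n: |x_i-x|<\rho\}=(1+o(1))\int_{B(x,\rho)}p(y)\,dy$ uniformly over all balls of mass $\gg(\log n)/n$. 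Balls at the $k$-NN scale have mass $\asymp k/n$, and $k=\omega(n^{2/(d+2)}\log(n)^{d/(d+2)})\gg\log n$, so the bound applies; in fact this step alone only needs $k=\omega(\log n)$, the stronger rate being inherited from ($\star$) for the random-walk statement in Corollary~\ref{cor:maincor}. Since $p$ is continuous and, by the bound on $\nabla\log p$, Lipschitz and bounded away from $0$ and $\infty$ on the compact $\overline D$, one has $\int_{B(x,\rho)}p=p(x)V_d\rho^d(1+o(1))$ uniformly as $\rho\to0$ in the interior; inverting $\#\{i\le n: |x_i-x|<r_k(x)\}=k$ then gives $r_k(x)=(k/(n\,p(x)V_d))^{1/d}(1+o(1))$, i.e.\ $g_n^{-1}\eps_n(x)\to\beps(x)$. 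Continuity and positivity of $\beps$ follow from those of $p$; $g_n\to0$ holds since necessarily $k=o(n)$ for the $k$-NN model to be meaningful; and $g_n n^{1/(d+2)}\log(n)^{-1/(d+2)}\to\infty$ is, after raising to the $d$-th power, precisely $k=\omega(n^{2/(d+2)}\log(n)^{d/(d+2)})$. Hence ($\star$) holds with these $g_n,\beps$.

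It then remains to substitute into Corollary~\ref{cor:maincor} with $|\NB_n(x)|\equiv k$ and $g_n^d=k/n$. In the first display, $n^{(d-2)/d}/(cV_d^{2/d}g_n^2)=n/(cV_d^{2/d}k^{2/d})$, and since $\big(n/(cV_d^{2/d}k^{2/d})\big)^{d/(d+2)}\,k^{2/(d+2)}=\big(n/(cV_d^{2/d})\big)^{d/(d+2)}$, multiplying by $|\NB_n(x)|^{2/(d+2)}=k^{2/(d+2)}$ leaves $\big(n/(cV_d^{2/d})\big)^{d/(d+2)}\pi_{X_n}(x)^{d/(d+2)}\to p(x)$. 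In the second display, $c^{-d/2}V_d^{-1}n^{-2}g_n^{-d}=c^{-d/2}V_d^{-1}n^{-1}k^{-1}$, and multiplying by $|\NB_n(x)|^{1/(d+2)}=k^{1/(d+2)}$ cancels the $k$, leaving $\big(c^{-d/2}V_d^{-1}n^{-1}\big)^{1/(d+2)}\pi_{X_n}(x)^{-1/(d+2)}\to\beps(x)$. These are exactly the two claimed limits.

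The main obstacle is controlling $r_k(x)$ uniformly up to the boundary $\partial D$: inside a collar of $\partial D$ of width $\sim g_n$ the ball $B(x,r_k(x))$ protrudes out of $D$, so its $p$-mass is strictly below $p(x)V_d r_k(x)^d$, and the inversion above overshoots $\beps(x)$ there, so the limit above is a priori only interior-uniform. One must either argue that this boundary layer is asymptotically irrelevant for the conclusion (it collapses onto the measure-zero set $\partial D$, and by smoothness of $\partial D$ the local volume defect is controlled by a half-space fraction, hence bounded), or state the convergence on compact subsets of the interior, which already suffices for the recovery application. Everything else — the interior uniform convergence and the constant bookkeeping above — is routine.
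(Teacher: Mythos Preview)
Your approach is essentially the same as the paper's: verify the second bullet of ($\star$) for the $k$-NN radius with $g_n=(k/n)^{1/d}$ and $\beps(x)=(V_d\,p(x))^{-1/d}$, then read off the two limits from Corollary~\ref{cor:maincor} using $|\NB_n(x)|\equiv k$. The only difference is that the paper dispatches the uniform convergence $g_n^{-1}\eps_n(x)\to\beps(x)$ in one line by citing Devroye's 1977 strong uniform consistency result for $k$-NN distances, whereas you sketch a direct VC/empirical-process argument (and explicitly work out the constant bookkeeping that the paper leaves implicit); your boundary discussion is extra care that the paper does not supply.
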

\begin{proof}
By \cite{devroye1977strong}, the empirical $\eps_n(x)$ induced by the $k$-nearest neighbors procedure satisfies the second condition of ($\star$) with 
\[
\beps(x) = \frac{1}{V_d^{1/d} p(x)^{1/d}} \text{ and } g_n = (k/n)^{1/d}. \qedhere
\]
\end{proof}

\subsection{Outline of approach}

Our proof proceeds via the following steps.

\begin{enumerate}
 \setlength{\itemsep}{-1pt}
\item[1.] As $n \to \infty$, the simple random walks $X_n(t)$ on $G_n$ converge weakly to an It\^o process $Y(t)$, yielding weak convergence between stationary measures. (Theorem \ref{thm:converge})
\item[2.] The stationary density $\pi_Y(x)$ is explicitly determined via Fokker-Planck equation. (Lemma \ref{lem:fokker})
\item[3.] Uniform equicontinuity of $n\pi_{X_n}(x)$ yields convergence in density after rescaling. (Theorem \ref{thm:mainthm})
\end{enumerate}

An intuitive explanation for our results is as follows.  For large $n$, the simple random walk on $G_n$, when considered with its original metric embedding, closely approximates the behavior of a drift-diffusion process.  Both the process and the approximating walk move preferentially toward regions where $p(x)$ is large and diffuse more slowly out of regions where $\beps(x)$ is small.  Occupation times therefore give us information about $p(x)$ and $\beps(x)$ which allow us to recover them.

Formally, the convergence of $X_n(t)$ to $Y(t)$ follows by verifying the conditions of the Stroock-Varadhan criterion (Theorem \ref{thm:stroock}) for convergence of discrete time Markov processes to It\^o processes \cite{stroock_varadhan}.  This criterion states that if the variance $a_n$, expected value $b_n$, a higher order moments $\Delta_{n, \alpha}$ of a jump are continuous and well-controlled in the limit, then the process converges to an It\^o process under mild technical conditions.  By using the Fokker-Planck equation, we can express the stationary density of this It\^o process solely in terms of $p(x)$ and the out-degree $|\NB_n(x)|$.  This allows us to estimate the density using only the unweighted graph.

Let $\overline{D}$ and $\partial D$ be the closure and boundary of the support $D$ of $p(x)$.  Let $B(x, \epsilon)$ be the ball of radius $\epsilon$ centered at $x$.  Let $h_n = g_n^2$ be the time rescaling necessary for $X_n(t)$ to have timescale equal to that of $Y(t)$.

%%% Local Variables: 
%%% mode: latex
%%% TeX-master: "aistats.tex"
%%% End:

\section{Convergence of the simple random walk to an It\^o process}

We will verify the regularity conditions of the Stroock-Varadhan criterion (see \cite[Section 6]{stroock_varadhan}).

\begin{thm}[Stroock-Varadhan] \label{thm:sv}\label{thm:stroock}
Let $X_n(t)$ be discrete-time Markov processes defined over a domain $D$ with boundary $\partial D$. Define the discrete time drift and diffusion coefficients by
\begin{align*}
a_n^{ij}(s, x) &= \frac{1}{h_n} \sum_{y\in \NB_n(x)} \frac{1}{|\NB_n(x)|} (y_i - x_i)(y_j - x_j)\\
b_n^{i}(s, x) &= \frac{1}{h_n} \sum_{y \in \NB_n(x)} \frac{1}{|\NB_n(x)|} (y_i-x_i)\\
\Delta_{n, \alpha}(s, x) &= \frac{1}{h_n} \sum_{y \in \NB_n(x)} \frac{1}{|\NB_n(x)|} |y-x|^{2+\alpha}.
\end{align*}
If we have $a_n^{ij}(s,x) \xrightarrow{a.s} a^{ij}(s,x)$, $b_n^{i}(s,x) \xrightarrow{a.s} b^{i}(s,x)$, $\Delta_{n, 1}(s,x) \xrightarrow{a.s} 0$, and regularity conditions to ensure reflection at $\partial D$ (\prettyref{Sthm:tightness} and \prettyref{Sthm:stroock}), the time-rescaled stochastic processes $X_n(\lfloor t/h_n \rfloor)$ converge weakly in Skorokhod space $\cD([0, \infty), \overline{D})$ to an It\^o process with reflecting boundary condition
\[
dY(t) = \sigma(t,Y(t)) dW_t + b(t,Y(t)) dt,
\]
with $W_t$ a standard $d$-dimensional Brownian motion and $\sigma(t, Y(t)) \sigma(t, Y(t))^T = a(t, Y(t))$.
\end{thm}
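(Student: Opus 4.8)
\section*{Proof proposal}

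The plan is to treat this as an instance of the classical martingale-problem method of Stroock and Varadhan, so that we only need to assemble three standard ingredients: tightness of the rescaled processes in $\cD([0,\infty),\overline D)$, identification of every subsequential limit as a solution of the (sub)martingale problem for $L = \frac12\sum_{ij} a^{ij}(s,x)\partial_i\partial_j + \sum_i b^i(s,x)\partial_i$ with reflection at $\partial D$, and well-posedness of that problem.

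First I would prove tightness of $\{X_n(\lfloor t/h_n\rfloor)\}$. Because $\overline D$ is compact it suffices, e.g.\ by Aldous's criterion, to bound increments over short time windows: writing one step of the walk as a jump $y-x$ with $y$ uniform in $\NB_n(x)$, the hypotheses bound the conditional second moment of a single jump by $h_n\,\mathrm{tr}(a_n)=O(h_n)$ and the conditional $(2+\alpha)$-moment by $h_n\Delta_{n,\alpha}=o(h_n)$, and summing these over the $O(\delta/h_n)$ steps in a window of length $\delta$ yields the required modulus-of-continuity estimate while simultaneously ruling out macroscopic jumps in the limit. The boundary regularity hypotheses (\prettyref{Sthm:tightness}) are used here to prevent mass from oscillating at or escaping through $\partial D$.

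Next I would identify the limit. For $f\in C^2(\overline D)$ with nonnegative inward normal derivative on $\partial D$, a second-order Taylor expansion of $E[f(X_n(k+1))-f(X_n(k))\mid X_n(k)=x]$, divided by $h_n$, equals $\sum_i b_n^i(s,x)\partial_i f(x) + \frac12\sum_{ij} a_n^{ij}(s,x)\partial_{ij}f(x)$ plus a remainder which, after the standard splitting of jumps into small and large parts, is controlled by $\Delta_{n,1}\to 0$; by the assumed convergences $a_n\to a$ and $b_n\to b$ this tends uniformly to $(Lf)(x)$. Summing the discrete Dynkin decomposition of $f(X_n(\cdot))$ and passing to the limit along a tight subsequence shows that $f(Y(t))-\int_0^t (Lf)(Y(u))\,du$ is a submartingale, i.e.\ $Y$ solves the Stroock--Varadhan submartingale problem for the reflecting diffusion, the nonnegativity of the reflection (local-time) term surviving in the limit by \prettyref{Sthm:stroock}. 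Standard uniqueness for this submartingale problem --- using continuity and uniform nondegeneracy of $a$ together with smoothness of $\partial D$ --- then pins down the limit, so the whole sequence converges weakly to the It\^o process $dY(t)=\sigma(t,Y(t))\,dW_t+b(t,Y(t))\,dt$ with $\sigma\sigma^T=a$.

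The main obstacle is the boundary: showing that the discrete walks genuinely reflect --- rather than absorb, stick, or oscillate --- at $\partial D$ in the scaling limit, and that the limiting reflecting martingale problem is well posed, are precisely the points that the deferred lemmas \prettyref{Sthm:tightness} and \prettyref{Sthm:stroock} are designed to supply; in the interior the argument reduces to the routine Taylor expansion above.
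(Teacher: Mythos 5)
The paper does not actually prove this theorem: it is imported wholesale from Stroock--Varadhan (\cite[Section~6]{stroock_varadhan}), with the regularity hypotheses for reflection deferred to supplementary lemmas and the only original content being the remark that the result extends from $\cD([0,T],\overline D)$ to $\cD([0,\infty),\overline D)$ via Whitt. What you have written is a faithful high-level outline of how the cited result is established in the literature --- Aldous tightness from the second- and $(2{+}\alpha)$-moment hypotheses, a discrete Dynkin/Taylor expansion showing every subsequential limit solves the reflecting submartingale problem for $L=\tfrac12\sum a^{ij}\partial_{ij}+\sum b^i\partial_i$, and uniqueness of that problem under nondegeneracy and smooth boundary --- so it is correct in outline, but it re-derives a result the paper treats as a black box rather than matching the paper's own (non-)proof.

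Two small points if you intend this as an actual proof rather than a sketch. First, you should say explicitly that convergence on $\cD([0,T],\overline D)$ for all $T$ implies convergence on $\cD([0,\infty),\overline D)$, which is the content of the cited Whitt theorem and is the only step the paper itself supplies. Second, in the identification step you assert uniform convergence of the Taylor remainder; to get this from $\Delta_{n,1}\to 0$ alone you need the standard split of each jump into $|y-x|\le\rho$ and $|y-x|>\rho$, using the second-moment bound on the small part and $\Delta_{n,1}$ (via Markov/Chebyshev) to bound the frequency and contribution of the large part --- you gesture at ``the standard splitting'' but this is precisely where the $\alpha=1$ moment is consumed, and it is worth making explicit since the paper's Theorem~\ref{thm:coefs} is devoted to verifying exactly these three limits.
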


\begin{remark}
The original result of Stroock-Varadhan was stated for $\cD([0, T], \overline{D})$ for all finite $T$; our version for $\cD([0, \infty), \overline{D})$ is equivalent by \cite[Theorem 2.8]{whitt}.
\end{remark}

The technical conditions of Theorem \ref{thm:sv} enforcing reflecting boundary conditions are checked in \prettyref{Sthm:C} to \prettyref{Sthm:B}. We focus on convergence of the drift and diffusion coefficients.

\begin{lem}[Strong LLN for local moments] \label{lem:lln}
For a function $f(x)$ such that $\sup_{x\in B(0,\eps)} |f(x)| < \eps$, given ($\star$) we have uniformly on $x \in \cX_n$ that
\begin{align*}
&\frac{1}{h_n} \sum_{y \in \NB_n(x)} \frac{1}{|\NB_n(x)|} f(y-x) \\
&\qquad\xrightarrow{a.s.} \frac{1}{h_n} \int_{y\in B(x, \eps_n(x))}f(y-x) \frac{p(y)}{p_{\eps_n(x)}(x)} dy.
\end{align*}
\end{lem}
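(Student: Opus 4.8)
The plan is to rewrite the empirical average on the left as a ratio of two empirical ``ball integrals'', compare each against its population value through a uniform concentration estimate, and then verify that the resulting error is small compared with the time rescaling $h_n = g_n^2$. Two consequences of ($\star$) are used freely throughout. First, since $g_n^{-1}\eps_n(x)$ converges uniformly to the continuous, strictly positive $\beps$ on the compact set $\overline{D}$, there are constants $0 < c_- < c_+ < \infty$ with $c_- g_n \le \eps_n(x) \le c_+ g_n$ for all $x \in \cX_n$ once $n$ is large. Second, boundedness of $\nabla\log p$ on $\overline{D}$ makes $p$ bounded above and below by positive constants, so, $\partial D$ being smooth, the ball mass $p_r(z) := \int_{B(z,r)} p(y)\,dy$ satisfies $p_r(z) \asymp g_n^d$ uniformly over $z \in \overline{D}$ and $r \in [c_- g_n, c_+ g_n]$.

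Fix $x = x_i$ and put $r_i = \eps_n(x_i)$. By construction of $G_n$, $|\NB_n(x_i)| = n M(x_i, r_i)$ and $\sum_{y \in \NB_n(x_i)} f(y - x_i) = n N(x_i, r_i)$ up to an $O(1/n)$ term depending on the self-loop convention, where for $z \in \overline{D}$ and $r > 0$
\[
N(z,r) = \frac1n \sum_{j=1}^n f(x_j - z)\,\mathbf 1[x_j \in B(z,r)], \qquad M(z,r) = \frac1n \sum_{j=1}^n \mathbf 1[x_j \in B(z,r)].
\]
The right-hand side of the lemma is exactly $h_n^{-1}\,\bar N(x_i, r_i)/\bar M(x_i, r_i)$ with $\bar N(z,r) = \int_{B(z,r)} f(y-z) p(y)\,dy$ and $\bar M(z,r) = p_r(z)$. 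From $\sup_{B(0,r)}|f| \le r = O(g_n)$ we get $|\bar N(z,r)| \lesssim g_n^{d+1}$ and $\bar M(z,r) \asymp g_n^d$, and with $A = N(x_i,r_i)$, $a = \bar N(x_i,r_i)$, $B = M(x_i,r_i)$, $b = \bar M(x_i,r_i)$, the identity $\frac AB - \frac ab = \frac{A-a}{B} - \frac ab\cdot\frac{B-b}{B}$ yields, once $B \gtrsim g_n^d$,
\[
\left| h_n^{-1}\tfrac AB - h_n^{-1}\tfrac ab \right| \;\lesssim\; g_n^{-2}\Big( g_n^{-d}\,|A-a| + g_n^{1-d}\,|B-b| \Big).
\]
Hence it is enough to bound $|N(z,r) - \bar N(z,r)|$ and $|M(z,r) - \bar M(z,r)|$ uniformly over $z \in \overline{D}$ and $r \in [c_-g_n, c_+g_n]$; evaluated at $z = x_i$, $r = r_i$ (the supremum over a continuum of radii absorbs the randomness of $r_i$), these also give $|B - b| = o(g_n^d)$, hence the $B \gtrsim g_n^d$ just used.

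The core claim is that, a.s., for a fixed large $K$ and all large $n$,
\[
\sup_{z,r} |N(z,r) - \bar N(z,r)| \le K\sqrt{\tfrac{g_n^{d+2}\log n}{n}}, \qquad \sup_{z,r}|M(z,r) - \bar M(z,r)| \le K\sqrt{\tfrac{g_n^{d}\log n}{n}}.
\]
For a single $(z,r)$, $N(z,r) - \bar N(z,r)$ is an average of $n$ i.i.d.\ mean-zero terms bounded by $c_+ g_n / n$ whose variances sum to at most $n^{-1}\mathbb E[f(x_1 - z)^2 \mathbf 1_{B(z,r)}(x_1)] \lesssim g_n^{d+2}/n$, so Bernstein's inequality at level $t = K\sqrt{g_n^{d+2}\log n / n}$ gives failure probability $\le 2\exp(-cK^2 \log n)$, the sub-exponential correction being lower order exactly because $n g_n^d = \omega(\log n)$ (which is \eqref{eq:nb}); likewise for $M$ with $g_n^d$ replacing $g_n^{d+2}$. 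To pass from one pair to the supremum I would either invoke the variance-sensitive empirical-process bound for the VC class of Euclidean balls, with envelope $c_+ g_n$ and variance proxy $g_n^{d+2}$, or argue by hand: a net in $(z,r)$ of polynomial-in-$n$ cardinality keeps the union bound at $\mathrm{poly}(n)\cdot n^{-cK^2}$, summable in $n$, while the discretization error is negligible because two nearby balls differ by an annulus of width $n^{-\Theta(1)}$ of tiny empirical mass and because $f$ is uniformly continuous on the relevant compact ball (it is a polynomial in all our applications). Borel--Cantelli then yields the displayed a.s.\ bounds.

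Substituting these into the inequality of the previous paragraph, uniformly in $i$ since $x_i \in \overline{D}$ and $r_i \in [c_-g_n, c_+g_n]$, gives
\[
\sup_{x \in \cX_n} \left| h_n^{-1}\tfrac AB - h_n^{-1}\tfrac ab\right| \;\lesssim\; g_n^{-2}\Big( g_n^{-d}\sqrt{\tfrac{g_n^{d+2}\log n}{n}} + g_n^{1-d}\sqrt{\tfrac{g_n^{d}\log n}{n}} \Big) \;\asymp\; \sqrt{\tfrac{\log n}{n\, g_n^{d+2}}},
\]
which tends to $0$ because the hypothesis $g_n\, n^{1/(d+2)}\log(n)^{-1/(d+2)} \to \infty$ is precisely $n g_n^{d+2} = \omega(\log n)$; this is the asserted uniform a.s.\ convergence. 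The hard part is the concentration step: the relevant summands have minuscule variance ($\lesssim g_n^{d+2}$, resp.\ $g_n^d$), so one must use a variance-adapted deviation inequality that is uniform over the continuum of shrinking balls $B(z,r)$ --- a crude supremum bound loses a factor $g_n^{-d/2}$ and does not suffice --- and one must check that the single $\log n$ it costs is exactly what the lower bound on $g_n$ in ($\star$) (equivalently the degree bound \eqref{eq:nb}) is calibrated to absorb; the remainder is bookkeeping with $\bar M \asymp g_n^d$, $\bar N \lesssim g_n^{d+1}$, and $h_n = g_n^2$.
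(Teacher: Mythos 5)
Your proposal is correct but follows a genuinely different route from the paper. The paper's proof conditions on $|\NB_n(x)|$: once the out-degree is known, the neighbors of $x$ are i.i.d.\ from the density $p(y)/p_{\eps_n(x)}(x)$ restricted to $B(x,\eps_n(x))$, so $\frac{1}{|\NB_n(x)|}\sum_{y\in\NB_n(x)} f(y-x)$ is a plain average of i.i.d.\ terms bounded by $\eps_n(x)$; Hoeffding's inequality then gives a tail $\exp(-\Theta(g_n^2\,|\NB_n(x)|\,t^2))$, which is $o(n^{-\omega(1)})$ by the degree bound \eqref{eq:nb}, and Borel--Cantelli closes the argument. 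You instead avoid conditioning: you write the empirical average as the ratio $N(x_i,r_i)/M(x_i,r_i)$ of two empirical ball integrals, prove a uniform Bernstein-type concentration bound for each over all centers $z\in\overline D$ and radii $r\in[c_-g_n,c_+g_n]$, and then control the ratio via the standard decomposition $A/B - a/b$.

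Each approach buys something. The paper's conditioning argument is shorter and applies to any measurable $f$ with $\sup_{B(0,\eps)}|f|<\eps$, with no continuity hypothesis; its cost is that one must actually justify the conditional i.i.d.\ structure when $\eps_n$ is stochastic (the paper's remark waves at this, and for $k$-NN it holds by an order-statistics argument). Your uniform-over-$(z,r)$ bound handles stochastic $\eps_n$ automatically (the supremum over a deterministic range absorbs the randomness of $r_i$) and gives an explicit rate $O(\sqrt{\log n/(n g_n^{d+2})})$ rather than just consistency; the cost is the heavier machinery in the supremum step. Note that as stated Lemma~\ref{lem:lln} only assumes boundedness of $f$, whereas your net/VC argument needs $f$ uniformly continuous (or the class $\{y\mapsto f(y-z)\mathbf 1_{B(z,r)}(y)\}$ to have a manageable uniform entropy); you acknowledge this parenthetically, and it is harmless since the lemma is applied only to polynomials, but a strictly faithful proof of the lemma as stated would need the paper's conditioning trick rather than the empirical-process route. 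Your arithmetic is right: the error $\sqrt{\log n/(n g_n^{d+2})}\to 0$ is exactly equivalent to the second condition in ($\star$), and the Bernstein sub-exponential correction is indeed negligible because $ng_n^d=\omega(\log n)$.
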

\begin{proof}
Denote the claimed value of the limit by $\mu(x)$.  For convergence in expectation, we condition on $|\NB_n(x)|$ and apply iterated expectation to get
\begin{align*}
&E\left[\frac{1}{h_n} \sum_{y \in \NB_n(x)} \frac{1}{|\NB_n(x)|} f(y-x)\right]\\
&\qquad= E\left[\frac{1}{h_n} E\left[f(y-x) \big| |\NB_n(x)|\right]\right] = \mu(x).
\end{align*}
For $y \in B(x, \eps_n(x))$, we have $|f(y - x)| \leq \eps_n(x)$, so Hoeffding's inequality yields
\begin{align} \label{eq:lln}
P\bigg( \bigg| \frac{1}{h_n} &\sum_{y\in \NB_n(x)} \frac{1}{|\NB_n(x)|} f(y-x) - \mu(x) \bigg| \geq t \bigg) \nonumber \\
&\leq 2 \exp \left( - \frac{2 h_n^2 |\NB_n(x)|^2 t^2 }{|\NB_n(x)| \eps_n(x)^2 }\right) \nonumber \\
&= \Theta\left(\exp \left( - 2g_n^2 \beps(x)^{-2} |\NB_n(x)| t^2\right) \right)\\
&= o\left(n^{-\frac{2 p(x)^{2/d}t^2}{\beps(x)^4} \omega(1)}\right) \nonumber\\
&= o(n^{-2 t^2 \omega(1)}) \nonumber
\end{align}
for $|\NB_n(x)| = \omega\left( n^{2/(d+2)} \log(n)^{d/(d+2)} \right)$ by (\ref{eq:nb}).  Borel-Cantelli then yields a.s. convergence.
\end{proof}

\begin{remark}
This limit holds even for stochastic $\eps_n(x)$ as long as $g_n^{-1}\eps_n(x)$ a.s. converges uniformly to a deterministic continuous $\beps(x)$. All statements up to \prettyref{eq:lln} hold regardless of stochasticity of $\eps_n(x)$ and the overall bound only requires convergence of $\eps_n(x)$. An example of such a graph is the $k$-nearest neighbors graph.
\end{remark}

We now compute the drift and diffusion coefficients in terms of $p(x)$ and $\beps(x)$.

\begin{thm}[Drift diffusion coefficients] \label{thm:coefs}
Almost surely on the draw of $\cX$, as $n \to \infty$, we have
\begin{align*}
\lim_{n \to \infty} a^{ij}_n(s, x) &= \delta_{ij} \frac{1}{3} \overline{\eps}(x)^2 \\
\lim_{n \to \infty} b^i_n(s, x) &= \frac{\partial_i p(x)}{3p(x)} \overline{\eps}(x)^2\\
\lim_{n \to \infty} \Delta_{n, 1}(s, x) &= 0,
\end{align*}
where $\delta_{ij}$ is the Kronecker delta function.
%\begin{align*}
%a^{ii}_n(s,x) &= \frac{1}{h_n}\left(\frac{1}{3}\eps_n(x)^2+o(\eps_n(x)^2)\right) \rightarrow \frac{1}{3} \overline{\eps}(x)^2 \\
%a^{ij}_n(s,x) &= \frac{1}{h_n}\frac{o(\eps_n(x)^{d + 2})}{2 C_{d - 1} p(x) \eps_n(x)^d + o(\eps_n(x)^d)} \rightarrow 0\\
%b^{i}_n(s,x) &= \frac{1}{h_n} \Big(\frac{1}{3} \frac{\partial_i p(x)}{p(x)} \eps_n(x)^2 + o(\eps_n(x)^2)\Big) \rightarrow \frac{\partial_i p(x)}{3p(x)} \overline{\eps}(x)^2\\
%\Delta_n(x,s) &= \frac{1}{h_n} \Big( \frac{\epsilon_n(x)^{d+4} p(x) C_{d-1} + o(\epsilon_n(x)^{d+4})}{2 C_{d - 1} p(x) \eps_n(x)^d + o(\eps_n(x)^d)}\Big) \rightarrow 0. \\
%\end{align*}
\end{thm}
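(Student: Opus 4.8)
We drop the argument $s$, on which none of the three coefficients depends (the walk is time-homogeneous). The plan is to replace each of the three empirical averages by an integral over the shrinking ball $B(x,\eps_n(x))$ via the strong law for local moments, and then to evaluate that integral by a first-order expansion of $p$ about $x$ together with the elementary moments of Lebesgue measure on a ball. By ($\star$), $g_n \to 0$ and $g_n^{-1}\eps_n(x) \to \beps(x)$ uniformly on $\overline{D}$, so $\eps_n(x) \to 0$ uniformly; hence, for all large $n$, the functions $f(z) = z_i z_j$, $f(z) = z_i$ and $f(z) = |z|^{3}$ all obey the hypothesis $\sup_{B(0,\eps_n(x))}|f| \le \eps_n(x)$ of Lemma~\ref{lem:lln} (for $f(z)=z_i$, apply the lemma to $(1-\delta)z_i$ and let $\delta\downarrow 0$, or note that the Hoeffding step there only uses ``$\le$''). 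Writing $z = y-x$ and, as in Lemma~\ref{lem:lln}, $p_{\eps}(x) := \int_{B(x,\eps)} p(y)\,dy$, the lemma then gives, a.s.\ and uniformly on $x \in \cX_n$, that $a_n^{ij}(x)$, $b_n^i(x)$ and $\Delta_{n,1}(x)$ share the limits (if any) of $\frac{1}{h_n\,p_{\eps_n(x)}(x)}\int_{B(0,\eps_n(x))} f(z)\,p(x+z)\,dz$ for $f(z) = z_iz_j$, $f(z)=z_i$, and $f(z) = |z|^3$, respectively.

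Next I would write $p(x+z) = p(x)\exp\bigl(\int_0^1 \nabla\log p(x+tz)\cdot z\,dt\bigr) = p(x)\bigl(1 + \nabla\log p(x)\cdot z + R(x,z)\bigr)$, where boundedness and uniform continuity of $\nabla\log p$ on the compact $\overline{D}$ (from ($\star$)) make $R(x,z) = o(|z|)$ uniformly in $x$ as $|z| \le \eps_n(x) \to 0$; it is important to argue through $\nabla\log p$ rather than a second-order Taylor expansion, since ($\star$) controls $\nabla\log p$ but not $\nabla^2 p$. Because the linear term is odd, $p_{\eps_n(x)}(x) = p(x)\,V_d\,\eps_n(x)^{d}(1+o(1))$. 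Substituting, and using that all odd moments of the ball vanish while $\int_{B(0,\eps)} z_i z_j\,dz = \delta_{ij}\tfrac{V_d}{d+2}\eps^{d+2}$: for $f = z_iz_j$ the constant term gives $\delta_{ij}\tfrac{V_d}{d+2}\eps_n(x)^{d+2}p(x)$, the linear term vanishes by oddness, and $R$ is $o(\eps_n(x)^{d+2})$; for $f = z_i$ the constant term vanishes by oddness and the linear term gives $\tfrac{V_d}{d+2}\eps_n(x)^{d+2}\,\partial_i p(x)$, with $R$ again $o(\eps_n(x)^{d+2})$; and for $f = |z|^3$ one has the crude bound $\int_{B(0,\eps_n(x))} |z|^3 p(x+z)\,dz \le \eps_n(x)^3\, p_{\eps_n(x)}(x)$. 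Dividing by $h_n\,p_{\eps_n(x)}(x)$ and using $h_n = g_n^2$ and $\eps_n(x)^2/h_n = g_n^{-2}\eps_n(x)^2 \to \beps(x)^2$ uniformly, the first two yield $a_n^{ij}(x) \to \delta_{ij}\,\beps(x)^2/(d+2)$ and $b_n^i(x) \to \beps(x)^2\,\partial_i p(x)/((d+2)p(x))$ — these are the limits in the statement, the constant being $\tfrac{1}{d+2}$ (equal to $\tfrac13$ when $d=1$) — while the third gives $\Delta_{n,1}(x) \le \eps_n(x)^3/h_n = g_n\,\beps(x)^3(1+o(1)) \to 0$.

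The main obstacle is keeping all of these $o(1)$ errors uniform over $x \in \cX_n$, as Theorem~\ref{thm:sv} requires of the limiting coefficient functions. Lemma~\ref{lem:lln} already gives the uniform a.s.\ convergence of the empirical local averages to their ball integrals; the remaining errors — in $R(x,z)$, in the normalizer $p_{\eps_n(x)}(x)$, and in $\eps_n(x)^2/h_n \to \beps(x)^2$ — are made uniform using the uniform continuity of $\nabla\log p$ and of $\beps$ on the compact $\overline{D}$ and the uniform convergence $g_n^{-1}\eps_n \to \beps$ from ($\star$). A secondary point: when $x$ lies within $\eps_n(x)$ of $\partial D$ the ball $B(x,\eps_n(x))$ is truncated by the boundary and the symmetry cancellations above fail; such points contribute the inward drift of the reflecting boundary condition and are handled not here but by the reflection and regularity conditions of Theorem~\ref{thm:sv}, verified in the supplement (\prettyref{Sthm:C} to \prettyref{Sthm:B}).
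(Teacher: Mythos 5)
Your proof takes the same route as the paper's: convert the empirical local averages to normalized ball integrals via Lemma~\ref{lem:lln}, expand $p$ to first order at $x$, and evaluate using the second moments of Lebesgue measure on a ball. Your execution is correct, and you have made the moment computations that the paper defers to the supplement explicit. Two points deserve comment. First, the constant. Your computation gives
\[
\lim_n a_n^{ij}(x) = \frac{\delta_{ij}}{d+2}\,\beps(x)^2, \qquad
\lim_n b_n^i(x) = \frac{\partial_i p(x)}{(d+2)p(x)}\,\beps(x)^2,
\]
which follows from $\int_{B(0,\eps)} z_i z_j\,dz = \delta_{ij} V_d \eps^{d+2}/(d+2)$ after dividing by $V_d\eps^d$ and by $h_n = g_n^2$. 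This disagrees with the $1/3$ printed in the theorem (and carried into \eqref{eq:ydef}): your constant is the correct one, and $1/3$ is right only when $d=1$, the case treated by von~Luxburg--Alamgir from which it was likely inherited. You are also correct that this has no downstream effect on Lemma~\ref{lem:fokker} or Theorem~\ref{thm:mainthm}, since $a$ and $b$ carry the same factor $1/(d+2)$ and the Fokker--Planck stationary density is invariant under a common rescaling of drift and diffusion; but a reader should be aware that Theorem~\ref{thm:coefs} as printed is off by the dimensional factor $3/(d+2)$. Second, your choice to expand through $\nabla\log p$, obtaining $p(x+z) = p(x)\bigl(1 + \nabla\log p(x)\cdot z + o(|z|)\bigr)$ uniformly, is a genuine tightening of the paper's stated expansion $p(x+y) = p(x) + y\cdot\nabla p(x) + o(|y|^2)$: the latter, read literally, is not implied by differentiability (a twice-differentiable $p$ would give $O(|y|^2)$, not $o(|y|^2)$), whereas your $o(|z|)$ remainder uses only what ($\star$) actually assumes and is still small enough after multiplying by $z_i$ or $z_i z_j$ and integrating, producing $o(\eps_n^{d+2})$ errors that vanish relative to the leading term. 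Your handling of the strict inequality in Lemma~\ref{lem:lln}'s hypothesis for $f(z)=z_i$ is a small but legitimate fix.
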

\begin{proof}
By Lemma \ref{lem:lln}, $a_n$, $b_n$, and $\Delta_{n, 1}$ converge a.s. to their expectations, so it suffices to verify that the integrals in Lemma \ref{lem:lln} have the claimed limits.  Because $p$ is differentiable on $D$, for any $x \in D$ we have the Taylor expansion
\[
p(x + y) = p(x) + y \cdot \nabla p(x) + o(|y|^2)
\]
of $p$ at $x$, where the convergence is uniform on compact sets.  For $n$ large so that $B(x, \eps_n(x))$ lies completely inside $D$, substituting this expansion into the definitions of $a_n$, $b_n$, and $\Delta_{n, 1}$ and integrating over spheres yields the result.  Full details are in \prettyref{Sthm:coefs}.
\end{proof}

\begin{thm} \label{thm:converge}
Under ($\star$), as $n\rightarrow \infty$ a.s. in the draw of $\cX$ the process $X_{n}(\lfloor t/h_n \rfloor)$ converges in $\cD([0, \infty), \overline{D})$ to the isotropic $\overline{D}$-valued It\^o process $Y(t)$ with reflecting boundary condition defined by
\begin{equation} \label{eq:ydef}
dY(t) = \frac{\nabla p(Y(t))}{3p(Y(t))}\overline{\epsilon}(Y(t))^2 dt + \frac{\overline{\epsilon}(Y(t))}{\sqrt{3}}  dW(t).
\end{equation}
\end{thm}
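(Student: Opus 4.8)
The plan is to invoke the Stroock--Varadhan criterion (Theorem~\ref{thm:stroock}) with the coefficient computations of Theorem~\ref{thm:coefs}. First I would observe that the simple random walk $X_n(t)$ on the fixed graph $G_n$ is time-homogeneous, so its one-step increments have conditional first and second moments that, after dividing by $h_n = g_n^2$, are exactly the quantities $b_n^i(s,x)$, $a_n^{ij}(s,x)$, $\Delta_{n,1}(s,x)$ of Theorem~\ref{thm:stroock} (the $s$-dependence being vacuous). By Theorem~\ref{thm:coefs}, a.s.\ in the draw of $\cX$ these converge to $a^{ij}(x) = \delta_{ij}\,\tfrac13\beps(x)^2$, $b^i(x) = \tfrac{\partial_i p(x)}{3p(x)}\beps(x)^2$, and $\Delta_{n,1}(s,x)\to 0$, with the convergence uniform on $\cX_n$ by the uniformity in Lemma~\ref{lem:lln}. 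Since only finitely many null events are involved (the complements coming from Lemma~\ref{lem:lln}/Theorem~\ref{thm:coefs} and from uniform convergence of $g_n^{-1}\eps_n$ to $\beps$), I fix one draw of $\cX$ in their common good event for the rest of the argument.

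Next I would identify the limiting process. The diffusion matrix $a(x) = \tfrac13\beps(x)^2 I$ is isotropic and, since $\beps > 0$ on the compact set $\overline D$, uniformly positive definite there; hence $\sigma(x) = \tfrac{\beps(x)}{\sqrt 3}\,I$ is a valid square root with $\sigma(x)\sigma(x)^T = a(x)$. Substituting this $\sigma$ together with $b$ into the limiting SDE asserted by Theorem~\ref{thm:stroock} gives precisely \eqref{eq:ydef}. The coefficients $b$ and $\sigma$ are continuous on $\overline D$ (using boundedness of $\nabla\log p$ from the first condition of $(\star)$ and continuity of $\beps$), and $a$ is uniformly elliptic there with a smooth reflecting boundary $\partial D$, so the reflected Itô SDE \eqref{eq:ydef} has a well-defined (weak) solution; this is the process $Y(t)$.

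It then remains to discharge the regularity hypotheses of Theorem~\ref{thm:stroock} that enforce normal reflection at $\partial D$: smoothness of $\partial D$ (part of $(\star)$), the fact that for $n$ large the jump sizes $\eps_n(x) = O(g_n) \to 0$ so the walk cannot overshoot the boundary, and the tightness/oscillation control of the rescaled trajectories near $\partial D$. These are exactly the conditions checked in the supplement (\prettyref{Sthm:tightness}, \prettyref{Sthm:stroock}, and \prettyref{Sthm:C}--\prettyref{Sthm:B}), so I would simply cite them. Feeding the drift/diffusion convergence, the vanishing of $\Delta_{n,1}$, and these boundary conditions into Theorem~\ref{thm:stroock} yields weak convergence of $X_n(\lfloor t/h_n\rfloor)$ in $\cD([0,\infty),\overline D)$ to $Y(t)$; the passage from $\cD([0,T],\overline D)$ for all finite $T$ to $\cD([0,\infty),\overline D)$ is the reduction noted after Theorem~\ref{thm:stroock}.

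I expect the main obstacle to be the boundary behavior rather than the interior coefficient asymptotics: near $\partial D$ the ball $B(x,\eps_n(x))$ can protrude from $D$, so the clean Taylor-expansion computation of Theorem~\ref{thm:coefs} must be supplemented by the estimates that guarantee the discrete walk reflects (instead of exiting or acquiring an anomalous boundary drift), and one must verify that Stroock--Varadhan's tightness condition holds uniformly in $n$. The secondary subtlety is purely bookkeeping of the almost-sure statements so that a single null set serves all $t \in [0,\infty)$ and all initializations, which is handled by the uniformity in Lemma~\ref{lem:lln} together with the reduction to finite time horizons.
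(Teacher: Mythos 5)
Your proposal follows exactly the paper's route: verify the Stroock--Varadhan hypotheses via Lemma~\ref{lem:lln} and Theorem~\ref{thm:coefs}, cite the supplement for the boundary/tightness conditions, and read off the limiting SDE from the computed drift and diffusion coefficients. The paper's own proof is a two-sentence version of the same argument; your write-up simply makes explicit the identification of $\sigma$ from $a$, the continuity and ellipticity checks, and the $[0,T]\to[0,\infty)$ reduction already noted in the remark after Theorem~\ref{thm:stroock}.
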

\begin{proof}
Lemma \ref{lem:lln} and Theorem \ref{thm:coefs} show that $X_n(\lfloor t/h_n\rfloor)$ fulfills the conditions of Theorem \ref{thm:stroock}.  The result follows from the Stroock-Varadhan criterion using the drift and diffusion terms from Theorem \ref{thm:coefs}.
\end{proof}

\section{Convergence and computation of the stationary distribution}

\subsection{Graphs satisfying condition ($\star$)}

The It\^o process $Y(t)$ is an isotropic drift-diffusion process, so the Fokker-Planck equation \cite{risken1984fokker} implies its density $f(t, x)$ at time $t$ satisfies
\begin{equation} \label{eq:fp}
\partial_t f(t, x) = \sum_i \bigg(- \partial_{x_i} [b^i(t, x)f(t, x)] +\frac{1}{2}\partial_{x_i^2}[a^{ii}(t, x)f(t, x)]\bigg),
\end{equation}
where $b^i(t, x)$ and $a^{ii}(t, x)$ are given by
\[
b(t, x) = \frac{\nabla p(x)}{3p(x)} \bar{\epsilon}(x)^2  \text{ and } a^{ii}(t, x) = \frac{1}{3} \bar{\epsilon}(x)^2.
\]

\begin{lem} \label{lem:fokker}
The process $Y(t)$ defined by (\ref{eq:ydef}) has absolutely continuous stationary measure with density
\[
\pi_Y(x) = cp(x)^2\overline{\eps}(x)^{-2},
\]
where $c$ was defined in (\ref{eq:maineq}).
\end{lem}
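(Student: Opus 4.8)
The plan is to verify directly that the claimed density is a stationary solution of the Fokker--Planck equation (\ref{eq:fp}) with the reflecting boundary condition, and then to invoke uniqueness of the invariant measure for a nondegenerate reflected diffusion on a compact domain. Recall that a stationary density $f$ for $Y(t)$ is characterized by $\sum_i \partial_{x_i} J^i(x) = 0$ on $D$ together with the no-flux condition $J(x)\cdot \nu(x) = 0$ on $\partial D$ (this is what ``reflecting boundary'' means for the density), where the probability current is $J^i(x) = b^i(x) f(x) - \tfrac12 \partial_{x_i}\!\big[a^{ii}(x) f(x)\big]$. Since $Y(t)$ is isotropic we have $a^{ii}(x) = \tfrac13 \beps(x)^2$ for every $i$ and $b^i(x) = \tfrac{\partial_i p(x)}{3 p(x)}\beps(x)^2$ by (\ref{eq:ydef}).

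First I would check that the candidate $f(x) = c\, p(x)^2 \beps(x)^{-2}$ makes the current vanish \emph{identically}, not just divergence-free. Substituting gives $a^{ii}(x) f(x) = \tfrac{c}{3} p(x)^2$, hence $\tfrac12 \partial_{x_i}[a^{ii} f] = \tfrac{c}{3}\, p(x)\,\partial_i p(x)$, while $b^i(x) f(x) = \tfrac{c}{3}\, p(x)\,\partial_i p(x)$ as well, so $J \equiv 0$ on $D$ (equivalently, $Y(t)$ is reversible with respect to $\pi_Y$). Note this uses only differentiability of $p$, so that $\partial_i(p^2)$ exists, and continuity and positivity of $\beps$ on the compact set $\overline D$; no second derivatives enter precisely because the flux is zero. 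Consequently $f$ solves the stationary Fokker--Planck equation, and the reflecting boundary condition $J\cdot\nu = 0$ on $\partial D$ holds trivially. Finiteness and positivity of $c^{-1} = \int_D p(x)^2 \beps(x)^{-2}\,dx$ follow since $p$ is continuous hence bounded on $\overline D$, $\beps$ is bounded away from $0$, and $D$ has finite volume; thus $f$ is a genuine probability density, absolutely continuous by construction.

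It then remains to argue this is \emph{the} stationary measure. The diffusion matrix $\sigma\sigma^T = \tfrac13 \beps(x)^2 I$ is uniformly elliptic on $\overline D$ because $\beps$ is continuous and strictly positive on a compact set, and the drift is bounded because $\nabla\log p$ and $\beps$ are bounded; together with path-connectedness of $D$ and smoothness of $\partial D$, the reflected It\^o process $Y(t)$ (already produced as a weak limit in Theorem \ref{thm:converge}) is positive recurrent with a unique invariant probability law. Hence the absolutely continuous stationary density found above is unique and equals $\pi_Y(x) = c\, p(x)^2 \beps(x)^{-2}$, which is the normalization constant $c$ of (\ref{eq:maineq}).

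The main obstacle is not the algebra — the cancellation is immediate — but making the two ``soft'' ingredients rigorous: (i) that a reflected diffusion with merely continuous (not Lipschitz) $\beps$ is well posed and ergodic with a unique invariant law, which one handles by observing that weak existence and uniqueness in law suffice here, since $Y(t)$ is obtained as the Skorokhod-space limit in Theorem \ref{thm:converge}, and uniform ellipticity plus compactness of $\overline D$ give positive recurrence; and (ii) that the distributional form of the stationary Fokker--Planck equation with the reflecting boundary condition genuinely characterizes the invariant density, so that exhibiting one zero-flux solution settles the matter. Both points can be quoted from standard references on reflected diffusions and the Fokker--Planck equation \cite{risken1984fokker}.
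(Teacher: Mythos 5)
Your proposal is correct and follows essentially the same route as the paper: plug the candidate density into the stationary Fokker--Planck equation (\ref{eq:fp}) and observe that the probability current $J^i = b^i f - \tfrac12\partial_{x_i}[a^{ii}f]$ vanishes identically, i.e.\ $\tfrac{c}{3}p\,\partial_i p - \tfrac{c}{3}p\,\partial_i p = 0$. You go further than the paper by spelling out the zero-flux (reversibility) structure, the reflecting boundary condition, and a uniqueness argument via uniform ellipticity and compactness; these are correct and welcome additions, but the underlying verification is the one the paper uses.
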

\begin{proof}
By (\ref{eq:fp}), to check that $\pi_Y(x) = cp(x)^2\overline{\epsilon}(x)^{-2}$, it suffices to show
\begin{equation*}
\sum_i \bigg(\partial_{x_i} p(x) \left(p(x)^{-1} \overline{\epsilon}(x)^{2}  c\frac{p(x)^2}{\overline{\epsilon}(x)^{2}}\right) - \frac{1}{2}\partial_{x_i}\left(\overline{\epsilon}(x)^{2} c \frac{p(x)^2}{\overline{\epsilon}(x)^2}\right)\bigg) = 0.\qedhere
\end{equation*}
\end{proof}

We now prove Theorem \ref{thm:mainthm} by showing that a rescaling of $\pi_{X_n}(x)$ converges to $\pi_{Y}(x)$.

\begin{proof}[Proof of Theorem \ref{thm:mainthm}]
The a.s.~convergence of processes of Theorem \ref{thm:converge} implies by Ethier-Kurtz \cite[Theorem 4.9.12]{ethier} that the empirical stationary measures
\[
d\mu_n = \sum_{i = 1}^n \pi_{X_n}(x_i) \delta_{x_i}
\]
converge weakly to the stationary measure $d\mu = \pi_Y(x) dx$ for $Y(t)$.  For any $x \in \cX$ and $\delta > 0$, weak convergence against $1_{B(x, \delta)}$ yields
\[
\sum_{y \in \cX_n, |y - x| < \delta} \pi_{X_n}(y) \to \int_{|y - x| < \delta} \pi_Y(y) dy.
\]
By uniform equicontinuity of $n \pi_{X_n}(x)$, for any $\eps > 0$ there is small enough $\delta > 0$ so that for all $n$ we have
\begin{equation*}
\left| \sum_{y \in \cX_n, |y - x| < \delta}\!\!\!\! \pi_{X_n}(y) - |\cX_n \cap B(x, \delta)| \pi_{X_n}(x)\right| \leq n^{-1}|\cX_n \cap B(x, \delta)| \eps,
\end{equation*}
which implies that
\begin{align*}
\lim_{n \to \infty} &\pi_{X_n}(x)p(x)n \\ &= \lim_{\delta \to 0} \lim_{n \to \infty}V_d^{-1}\delta^{-d}n \pi_{X_n}(x) \int_{|y - x| < \delta} p(y) dy \\
&= \lim_{\delta \to 0} \lim_{n \to \infty}V_d^{-1}\delta^{-d} |\cX_n \cap B(x, \delta)| \pi_{X_n}(x)\\
&= \lim_{\delta \to 0} V_d^{-1}\delta^{-d} \int_{|y - x| < \delta} \pi_Y(y) dy = \pi_Y(x).
\end{align*}
Combining with Lemma \ref{lem:fokker} yields the desired
\[
\lim_{n\to \infty} n \pi_{X_n}(x) = \frac{\pi_Y(x)}{p(x)} = c \frac{p(x)}{\beps(x)^2}. \qedhere
\]
\end{proof}

\subsection{Extension to isotropic graphs}

To obtain our stationary distribution in Theorem \ref{thm:mainthm} we require only convergence to some It\^o process via the Stroock-Varadhan criterion. We can achieve this under substantially more general conditions. We define a class of neighborhood graphs on $\cX_n$ termed \emph{isotropic} over which we have consistent metric recovery without knowledge of the graph construction method.

\begin{defn}[Isotropic]
A graph edge connection procedure on $\cX_n$ is isotropic if it satisfies:
\begin{description}
\setlength{\itemsep}{-1pt}
\item[Distance kernel:] The probability of placing a directed edge from $i$ to $j$ is defined by a kernel function $h(r_{ij})$ mapping locally scaled distances
\[
r_{ij} = |x_i - x_j|\epsilon_n(x_i)^{-1}
\]
with $\epsilon_n(x)$ obeying ($\star$) to probabilities
\item[Nonzero mass:] The kernel function $h(r)$ has nonzero integral $\int_0^1 h(r) r^{d-1} dr > 0$.
\item[Bounded tails:] For all $r > 1$, $h(r) = 0$.
\item[Continuity:] The scaling $n\pi_{X_n}(x)$ of the stationary distribution is uniformly equicontinuous.
\end{description}
\end{defn}

This class of graph preserves the property that the random graph is entirely determined by the underlying density $p(x)$ and local scale $\beps(x)$; this allows us to have the same tractable form for the stationary distribution.

Both constant $\epsilon$ and $k$-nearest neighbor graphs are isotropic upon assumption of uniform equicontinuity. Another interesting class of graphs allowed by this generalization is truncated Gaussian kernels, where connectivity probability decreases exponentially.  Note that $h(r)$ might not be monotonic or continuous in $r$; one surprising example is $h(r) = 1_{[0.5, 1]}(r)$, which deterministically connects points in an annulus.

\begin{cor}[Generalization] \label{cor:general}
If a neighborhood graph is isotropic, then the limiting stationary distribution follows Theorem \ref{thm:mainthm}, and the density and distances can be estimated by Corollary \ref{cor:maincor}.
\end{cor}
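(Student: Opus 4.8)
The plan is to recycle the pipeline of the preceding two sections --- strong law for local moments $\Rightarrow$ drift/diffusion coefficients $\Rightarrow$ Stroock--Varadhan $\Rightarrow$ Fokker--Planck $\Rightarrow$ Ethier--Kurtz plus equicontinuity --- and to notice that the four isotropic axioms are exactly tailored to make each step survive, with the kernel $h$ perturbing the limiting drift and diffusion only by a \emph{common} positive constant. Fix a draw of $\cX$ and of the (now random) edge set. Conditioned on $\cX_n$, a step of the simple random walk from a vertex $x$ lands at an out-neighbor $y$ with probability proportional to $h(|x-y|\,\eps_n(x)^{-1})$, so the empirical local moments of Lemma \ref{lem:lln} become $h$-weighted averages. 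The ``bounded tails'' axiom forces every realized jump to satisfy $|y - x| < \eps_n(x)$, so Hoeffding's inequality applies just as in \eqref{eq:lln}; ``nonzero mass'' keeps the normalizer $\int_{B(x,\eps_n(x))} h(|y-x|\eps_n(x)^{-1})\,p(y)\,dy$ bounded below by a positive multiple of $p(x)\eps_n(x)^d$; and a Chernoff bound on the (random) out-degree together with ($\star$) gives $|\NB_n(x)| = \omega(n^{2/(d+2)}\log(n)^{d/(d+2)})$ uniformly in $x$, which is exactly the regime used in \eqref{eq:lln}. Borel--Cantelli then delivers an $h$-weighted version of Lemma \ref{lem:lln}. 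The one genuinely new ingredient is that there are now two sources of randomness --- the point locations \emph{and} the edge indicators --- but both are handled by the same concentration-plus-Borel--Cantelli scheme.

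Next I would redo the Taylor-expansion computation of Theorem \ref{thm:coefs}: writing $p(x+y) = p(x) + y\cdot\nabla p(x) + o(|y|^2)$ and integrating against $h(|y|\eps_n(x)^{-1})$ over $B(0,\eps_n(x))$, the odd-order terms drop out by rotational symmetry of $h$ and one gets, a.s.\ and uniformly,
\begin{align*}
\lim_{n\to\infty} a_n^{ij}(s,x) &= \delta_{ij}\,\kappa_h\,\beps(x)^2,\\
\lim_{n\to\infty} b_n^{i}(s,x) &= \kappa_h\,\frac{\partial_i p(x)}{p(x)}\,\beps(x)^2,\\
\lim_{n\to\infty} \Delta_{n,1}(s,x) &= 0,
\end{align*}
where $\kappa_h = \tfrac{1}{d}\big(\int_0^1 h(r)r^{d+1}\,dr\big)\big/\big(\int_0^1 h(r)r^{d-1}\,dr\big)$ is finite (as $h\le 1$ is supported in $[0,1]$) and positive (by ``nonzero mass''), and plays the role of the constant in Theorem \ref{thm:coefs}. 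Crucially these limiting coefficients are continuous in $x$ even when $h$ is discontinuous or non-monotone in $r$ (as for $h = 1_{[0.5,1]}$), since the $r$-dependence has been integrated out and $\beps$ is continuous. With the reflecting-boundary regularity conditions verified as in the supplement, Theorem \ref{thm:stroock} then gives convergence of $X_n(\lfloor t/h_n\rfloor)$ in $\cD([0,\infty),\overline{D})$ to the It\^o process $dY(t) = \kappa_h\frac{\nabla p(Y)}{p(Y)}\beps(Y)^2\,dt + \sqrt{\kappa_h}\,\beps(Y)\,dW(t)$.

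The punchline is that this It\^o process has the \emph{same} drift-to-diffusion ratio $b^i/a^{ii} = \partial_i p/p$ as the process \eqref{eq:ydef}, so the Fokker--Planck verification of Lemma \ref{lem:fokker} goes through unchanged with the factor $\kappa_h$ cancelling, and the stationary density is again $\pi_Y(x) = c\,p(x)^2\beps(x)^{-2}$ with the \emph{same} $c^{-1} = \int p(x)^2\beps(x)^{-2}\,dx$ (which is pinned down by normalization and never sees $\kappa_h$). From here the proof of Theorem \ref{thm:mainthm} applies verbatim: Ethier--Kurtz gives weak convergence of the empirical stationary measures $\sum_i\pi_{X_n}(x_i)\delta_{x_i}$ to $\pi_Y(x)\,dx$, and the ``Continuity'' axiom (uniform equicontinuity of $n\pi_{X_n}$) upgrades this to the pointwise statement $n\pi_{X_n}(x)\to \pi_Y(x)/p(x) = c\,p(x)/\beps(x)^2$ of Theorem \ref{thm:mainthm}. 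For Corollary \ref{cor:maincor}, replace the out-degree identity $p(x)\eps_n(x)^dV_d = |\NB_n(x)|/n$ by $p(x)\eps_n(x)^d\big(dV_d\int_0^1 h(r)r^{d-1}\,dr\big) = |\NB_n(x)|/n$; the kernel- and scaling-dependent constants ($\kappa_h$, $\int_0^1 h(r)r^{d-1}\,dr$, $g_n$) then enter only as overall multiplicative factors, which is harmless because recovery is claimed only up to proportionality. I expect the main obstacle to be the first step --- pushing the strong law for local moments and the out-degree concentration through a kernel that may be discontinuous, non-monotone, and itself random (truncated Gaussians, annulus indicators), and re-checking the boundary-reflection regularity hypotheses of Theorem \ref{thm:stroock} for this wider family of It\^o processes.
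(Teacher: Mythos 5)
Your proposal follows the same route as the paper: recycle the Stroock--Varadhan pipeline, prove an $h$-weighted analogue of Lemma~\ref{lem:lln} (handling the extra edge-randomness via concentration and Borel--Cantelli), compute the new drift and diffusion coefficients, and observe that the kernel constant $\kappa_h$ cancels in the drift-to-diffusion ratio, so the Fokker--Planck stationary density and the downstream recovery are unchanged. This is exactly the paper's argument (``the ratios of integrals in Theorem~\ref{thm:coefs} are unchanged in the isotropic setting''), and your explicit $\kappa_h$ computation and adjusted out-degree normalization fill in details the paper defers to its supplement.
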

\begin{proof}
\vspace{-1em}
We check the Stroock-Varadhan condition stated in Theorem \ref{thm:stroock}.  For this, we use a version of Lemma \ref{lem:lln} for isotropic graphs, which requires that the ball radius vanishes and that the neighborhood size scales as $\omega(n^{\frac{2}{d+2}} \log(n)^{\frac{d}{d+2}})$.

Vanishing neighborhood radius follows because bounded tails and the fact that the kernel is evaluated on $|x_i - x_j|\eps_n(x_i)^{-1}$ ensure the isotropic graph is a subgraph of the $\eps_n(x)$-ball graph.  Kolmogorov's strong law implies that the stochastic out-degree concentrates around its expectation.  It has the correct scaling because the argument of $h(r)$ is scaled by $\eps_n(x)$. See \prettyref{Sthm:general-degree} for details. Thus the analogue of Lemma \ref{lem:lln} holds.

We then check that the limiting local moments for isotropic graphs are proportional to those of $\eps_n(x)$-ball graphs in \prettyref{Slem:polyint}.  All but one of the conditions for the Stroock-Varadhan criterion follow from this; the last \prettyref{Sthm:f4} follows from the bounded ball structure of the connectivity kernel.

To check that we obtain the same limiting process and stationary measure, note the ratios of integrals in Theorem \ref{thm:coefs} are unchanged in the isotropic setting.  See \prettyref{Slem:polyint} for details.  Recovering the stationary distribution, density, and local scale is then done in the same manner as in the $\eps$-ball setting.
\end{proof}

\section{Distance recovery via paths}

Our results in Theorem \ref{thm:mainthm} give a consistent estimator for the density $p(x)$ and the local scale $\beps(x)$. These two quantities specify up to isometry the latent metric embedding of $\cX$.

In order to reconstruct distances between non-neighbor points we weight the edges of $G_n$ by weights $w_{ij} = \eps_n(x_i)$ and find the shortest paths over this graph, which we call $\overline{G}_n$. The results of Alamgir-von Luxburg \cite[Section 4.1]{alamgir2012shortest} show that in the $k$-nearest neighbor graph case, setting $w_{ij} = \widehat{\eps}_n(x_i)$ for the estimator $\widehat{\eps}_n$ of $\eps_n$ results in consistent recovery of pairwise distances.

In \prettyref{Sthm:dist}, we give a straightforward extension of this approach to show that given any uniformly convergent estimator of $\eps_n(x)$, the shortest path on the weighted graph $\overline{G}_n$ converges to the geodesic distance.  Applying standard metric multidimensional scaling then allows us to embed these distances and recover the latent space up to isometry.

%%% Local Variables:
%%% mode: latex
%%% TeX-master: "nips2014.tex"
%%% End:

\FloatBarrier

\begin{figure*}[t!]
\vspace{-3mm}
\centering
\begin{minipage}[t]{0.34\linewidth}
\includegraphics[width=0.8\linewidth,page=1]{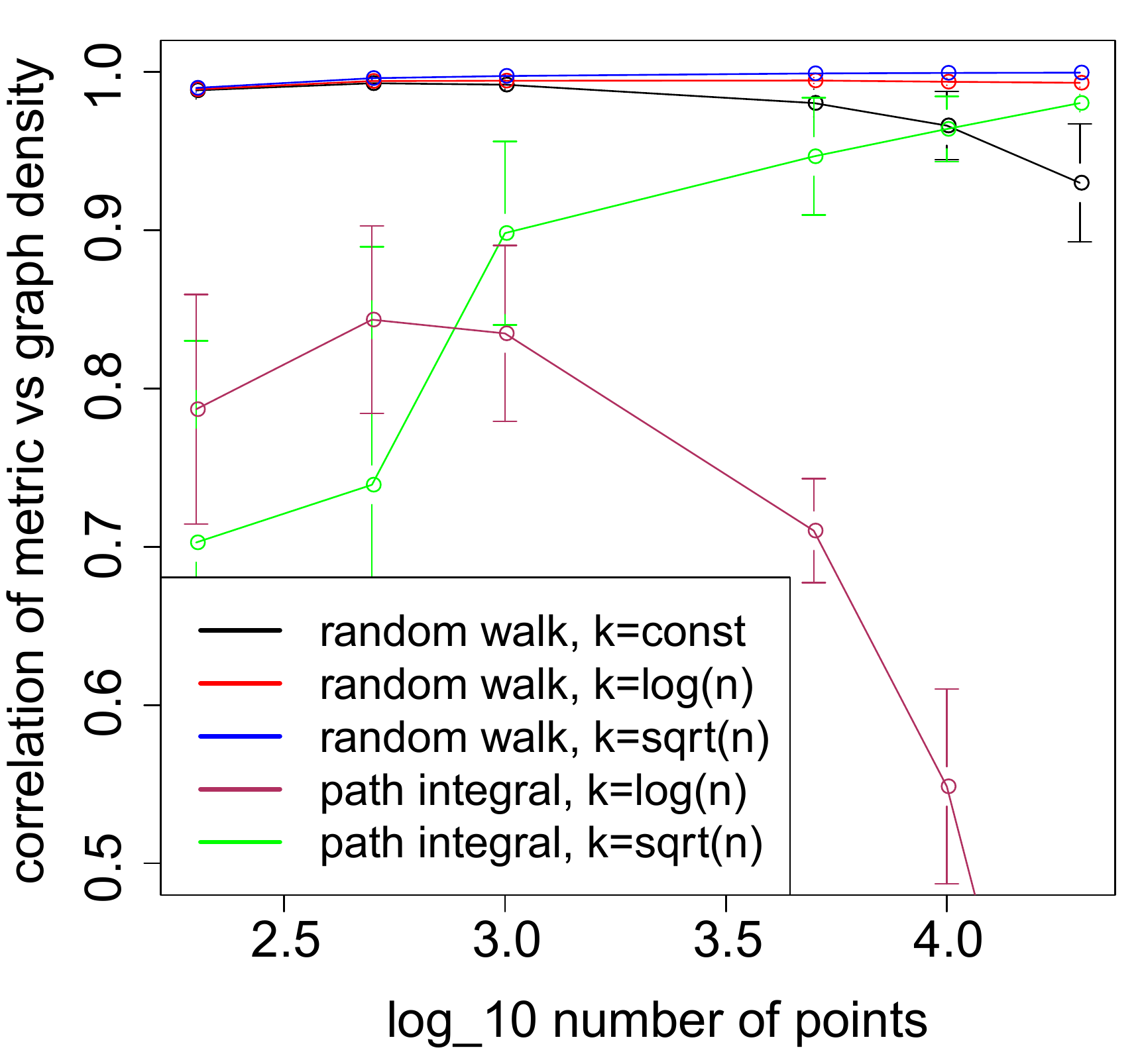}
\caption{Accuracy vs sample and neighborhood size. Path integral (green, maroon) is from Alamgir-von~Luxburg \cite{alamgir_density_2013}. Our estimator (red, blue, black) is nearly perfect at all sample sizes and neighborhood sizes.}
\label{fig2}
\end{minipage}
\hspace{0.5em}
\begin{minipage}[t]{0.63\linewidth}
\includegraphics[width=\linewidth,page=1]{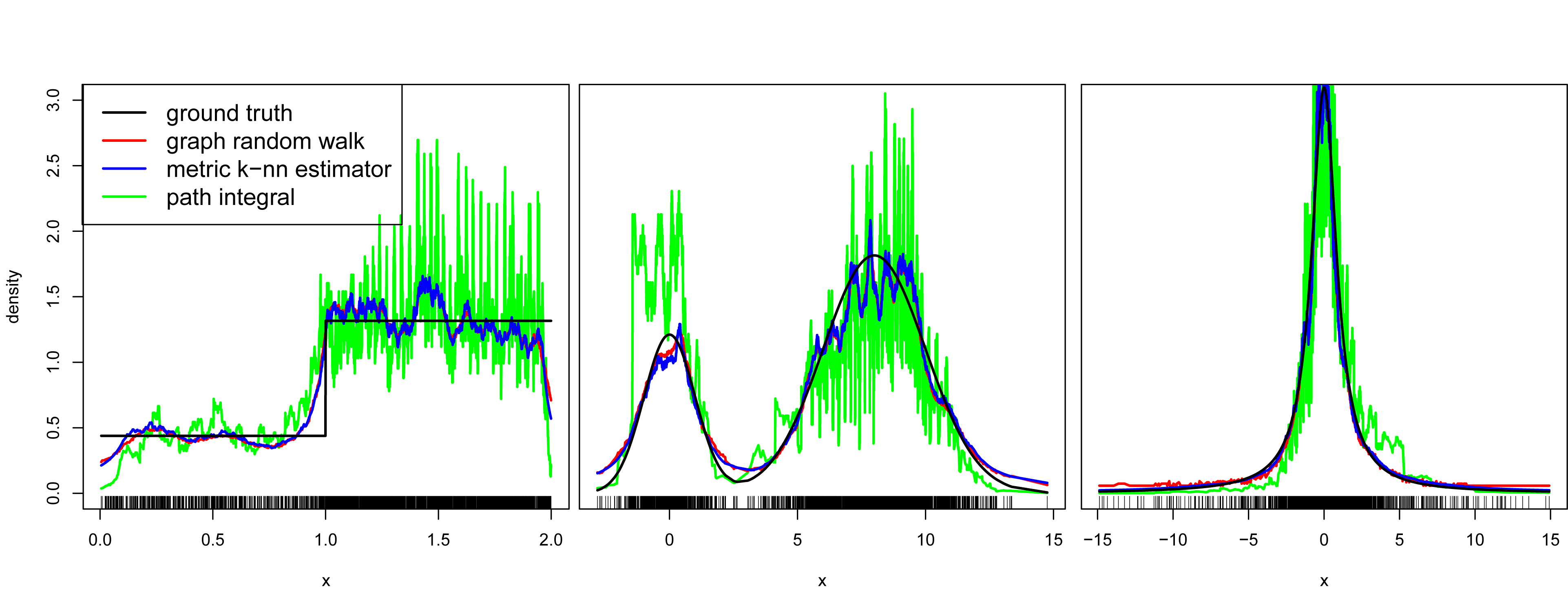}
\caption{Examples of four density estimates: our method (red) using no metric information is indistinguishable from metric $k$-nearest neighbor (blue) and close to ground truth (black). Path integral estimator of Alamgir-von~Luxburg \cite{alamgir_density_2013} (green) shows higher error in all cases.}
\label{fig3}
\end{minipage}
\end{figure*}

\section{Empirical results}

We demonstrate extremely good finite sample performance of our estimator in simulated density reconstruction problems and two real-world datasets. Some details such as exact graph degrees and distribution parameters are in the supplementary code which reproduces all figures in this paper. Standard graph statistics such as centrality and Jaccard index are calculated via the \texttt{igraph} package \cite{igraph}.

\paragraph{$k$-nearest neighbor graphs}

We compared our random-walk based estimator and the path-integral based estimator of von~Luxburg-Alamgir \cite{alamgir_density_2013} to the metric $k$-nearest neighbor density estimator. The number of samples $n$ was varied from $100$ to $20000$ along with the sparsity level $k$ (Figure \ref{fig2}).

While our theoretical results suggest that both our algorithm and the path-integral estimator of von~Luxburg-Alamgir \cite{alamgir_density_2013} might fail to converge at $\sqrt{n}$ and $\log(n)$ sparsity levels, in practice our estimator performs nearly perfectly at both low sparsity levels.

For constant degree $k=50$ we achieve near-perfect performance for all choices of $n$, while the path-integral estimator fails to converge in the $k = \log(n)$ regime.

Some specific examples of our density estimator with $n=2000,k=100$ are shown in Figure \ref{fig3}. The examples are mixture of uniforms (left), mixture of Gaussians (center), and $t$-distribution (right). As predicted, our estimator tracks extremely closely with the metric $k$-nearest neighbor estimator (red and blue), as well as the true density (black).  The path integral estimator has high estimate variance at points with large density and fails to cope with the two mixture densities.

Varying the dimension for an isotropic multivariate normal with $k=\sqrt{n}$, we find that a large number of points are required to maintain high accuracy as $d$ grows large (red and blue lines in Figure \ref{dim}). However, this is due to a global `flattening' of the density. Measuring the correlation between the true and estimated log probabilities show that up to a global concentration parameter, the estimator maintains high accuracy across a large number of dimensions (black lines).

\begin{figure*}[t!]
\vspace{-5mm}
\begin{minipage}[t]{0.3\linewidth}
\centering
\includegraphics[width=0.9\linewidth,page=1]{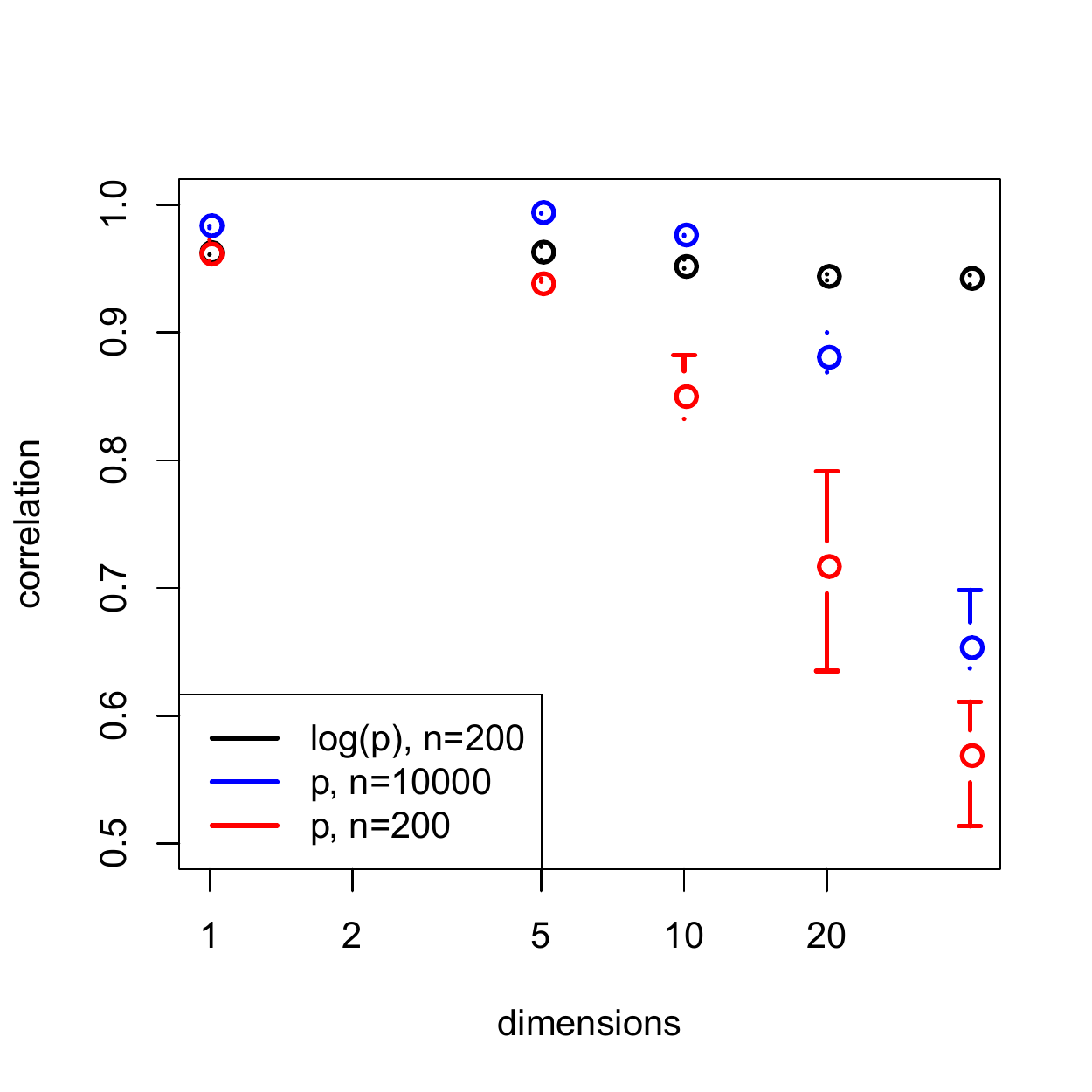}
\caption{Estimate performance degrades in high dimensions due to over-smoothing (blue and red), but the estimator is still highly accurate up to log concentration parameter (black).}
\label{dim}
\end{minipage}
\hspace{1em}
\begin{minipage}[t]{0.7\linewidth}
\includegraphics[width=\linewidth]{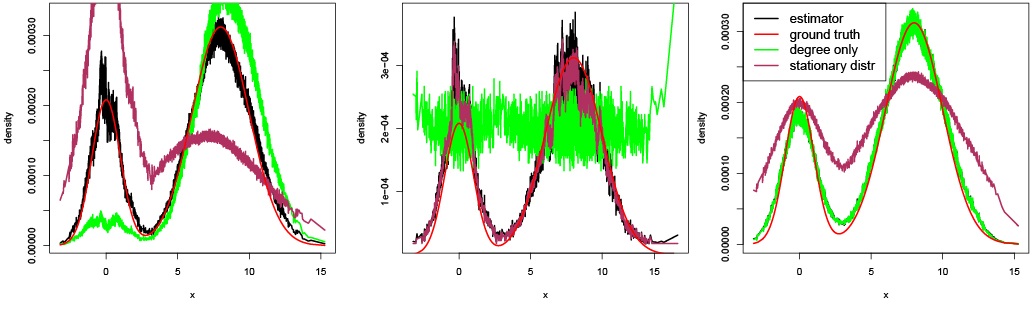}
\caption{Example isotropic graphs. Our estimator (black) agrees with the true density (red) in all cases. Degree and stationary distribution (green and maroon) based density estimates work for some cases (right two panels) but cannot work if the degree is tied to spatial location (left).}
\label{kernel}
\end{minipage}
\end{figure*}

\paragraph{Kernel graphs}

We validate the nonparametric estimator in Corollary \ref{cor:general} for kernel graphs by constructing three drastically different kernel graphs. In all cases, we sampled 5000 points with the connection probability following $p_{i,j} = \exp(-\epsilon(x_i)^{-1}|x_i-x_j|)$. We varied the neighborhood structure $\epsilon$ in  three ways: a constant kernel, $\epsilon(x_i) \propto 1$; $k$-nearest neighbor kernel: $\epsilon(x) \propto 1/\epsilon_{k=100}$; and spatially varying kernel $\epsilon(x) \propto |x|$.

In Figure \ref{kernel}, we find that our nonparametric estimator (black) always matches the ground truth (red). This example also shows that both the degree and the stationary distribution can be valid density estimators under certain assumptions, but only our estimator can deal with arbitrary isotropic graph construction methods without assumptions.

\begin{figure*}[t!]
\centering
\vspace{-3mm}
\begin{minipage}[t]{0.25\textwidth}
\includegraphics[width=0.7\textwidth]{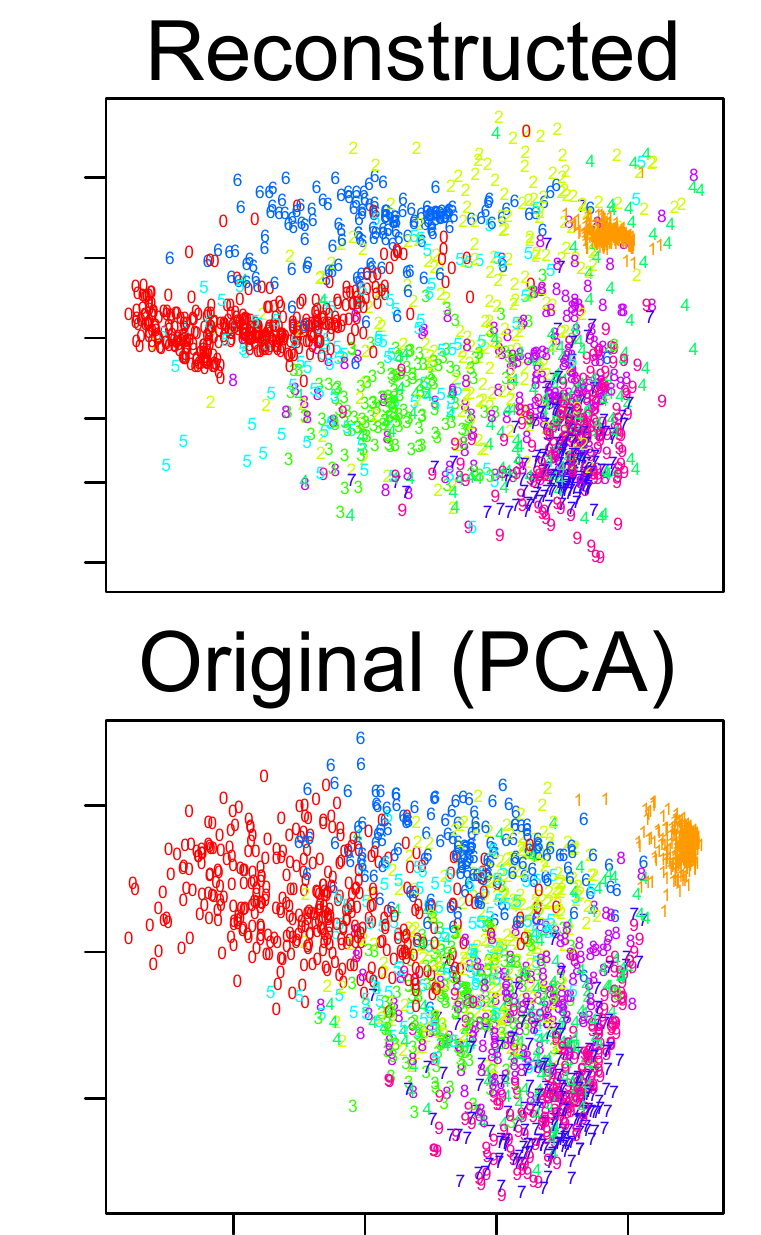}
\caption{Reconstruction closely matches projection of the true metric.}
\label{usps}
\end{minipage}
\hspace{0.5em}
\begin{minipage}[t]{0.27\textwidth}
\includegraphics[width=\textwidth]{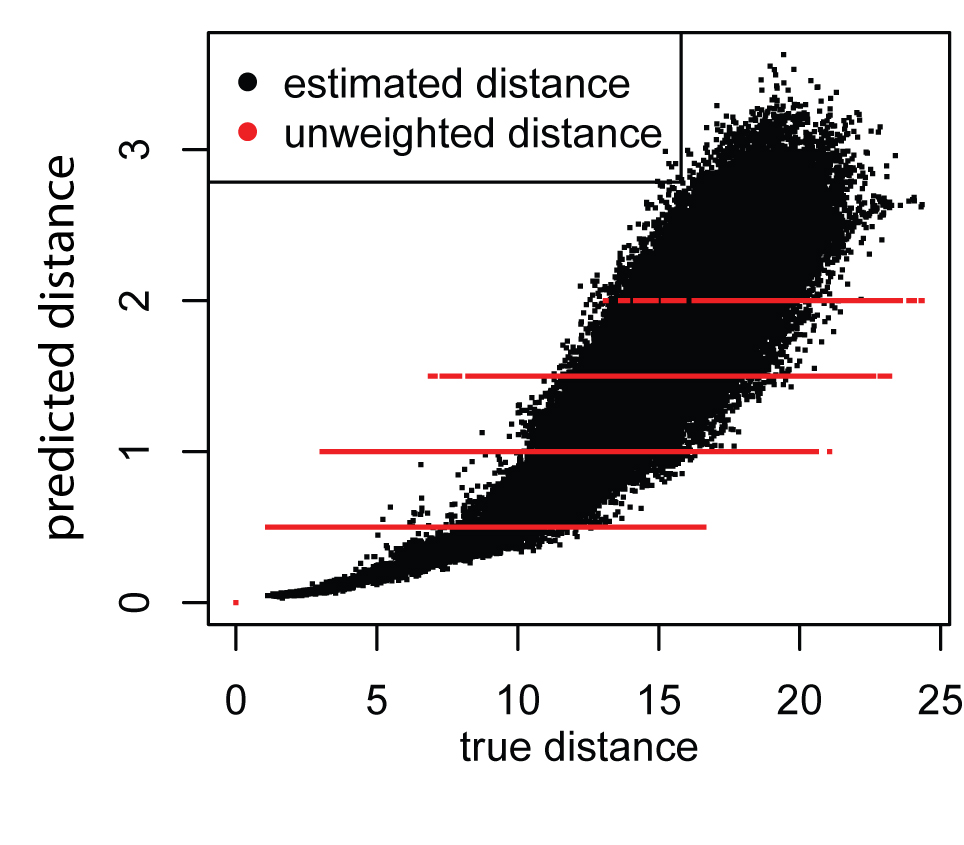}
\caption{Distances estimated by our method are globally close to the true metric.}
\label{dist}
\end{minipage}
\hspace{0.5em}
\begin{minipage}[t]{0.4\textwidth}
\includegraphics[width=\textwidth]{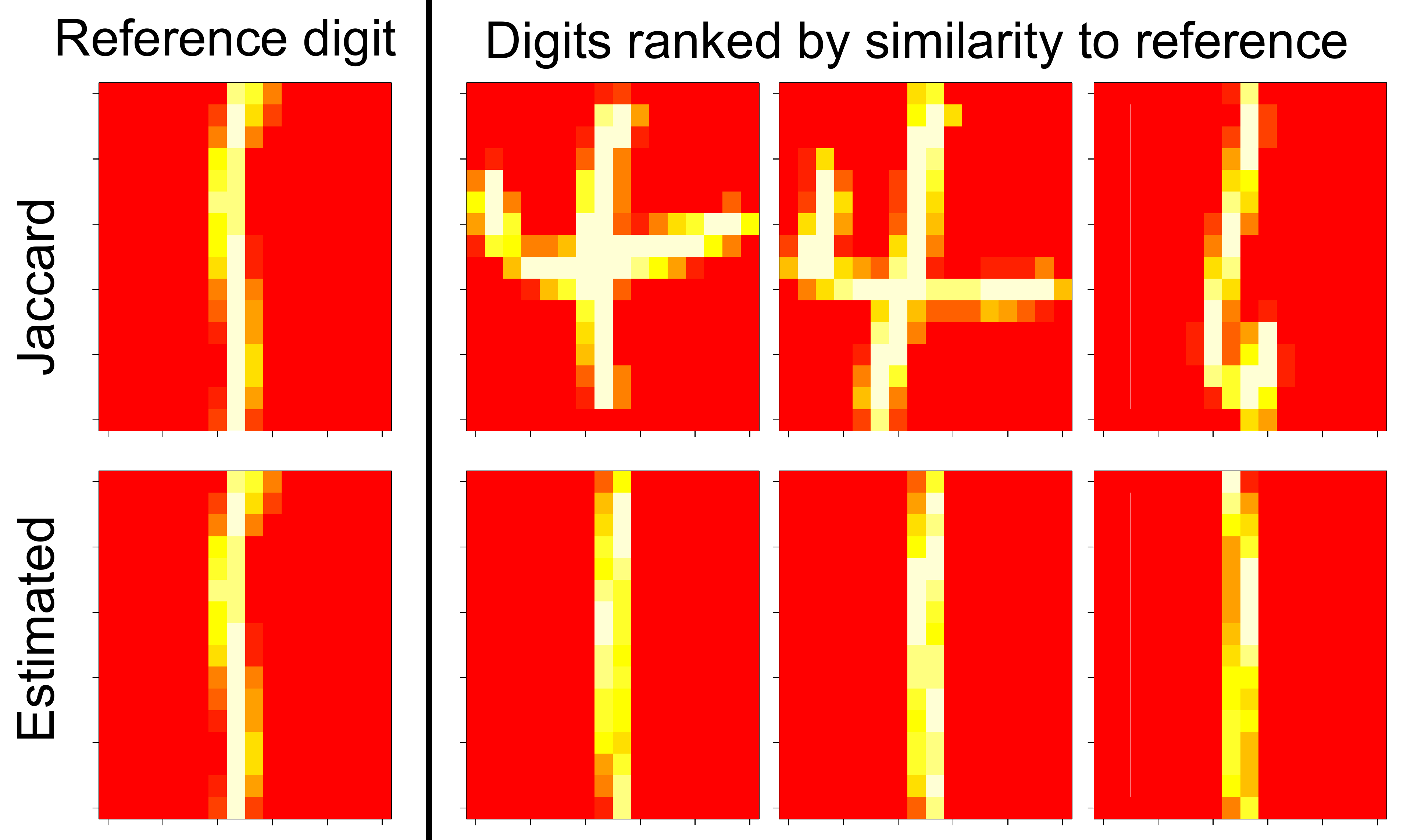}
\caption{Items close in our weighted graph (bottom) are more similar than those under the Jaccard index (top).}
\label{similarity}
\end{minipage}
\end{figure*}

\paragraph{Metric recovery on real data}

As an example of metric reconstruction, we take the first 2000 examples in the U.S. postal service (USPS) digits dataset \cite{hull1994database} and construct an unweighted $k$-nearest neighbor graph. We use our method to reconstruct the metric and perform similarity queries, and the Jaccard index was used to tie-break direct neighbors.

The USPS digits dataset is known to have a high-density cluster of ones digits (orange). Results in Figure \ref{usps} show that we are able to successfully recover the density structure of the data (top). Inter-point distances estimated by our method (Figure \ref{dist}, $y$-axis) show nearly linear agreement to the true metric ($x$-axis) at short distances and high similarity globally.

Performing a similarity query on the data (Figure \ref{similarity}) shows that the our reconstructed distances (bottom row) have a more coherent set of similar digits when compared to the Jaccard index (top row) \cite{jacindex}. The behavior of the unweighted Jaccard similarity is due to a known problem with shortest paths in $k$-nearest neighbor graphs preferring low density regions \cite{alamgir_density_2013}.

\paragraph{Amazon co-purchasing data}

\begin{figure*}[t!]
\vspace{-5mm}
\centering
\begin{minipage}[t]{0.5\textwidth}
\includegraphics[width=0.9\textwidth]{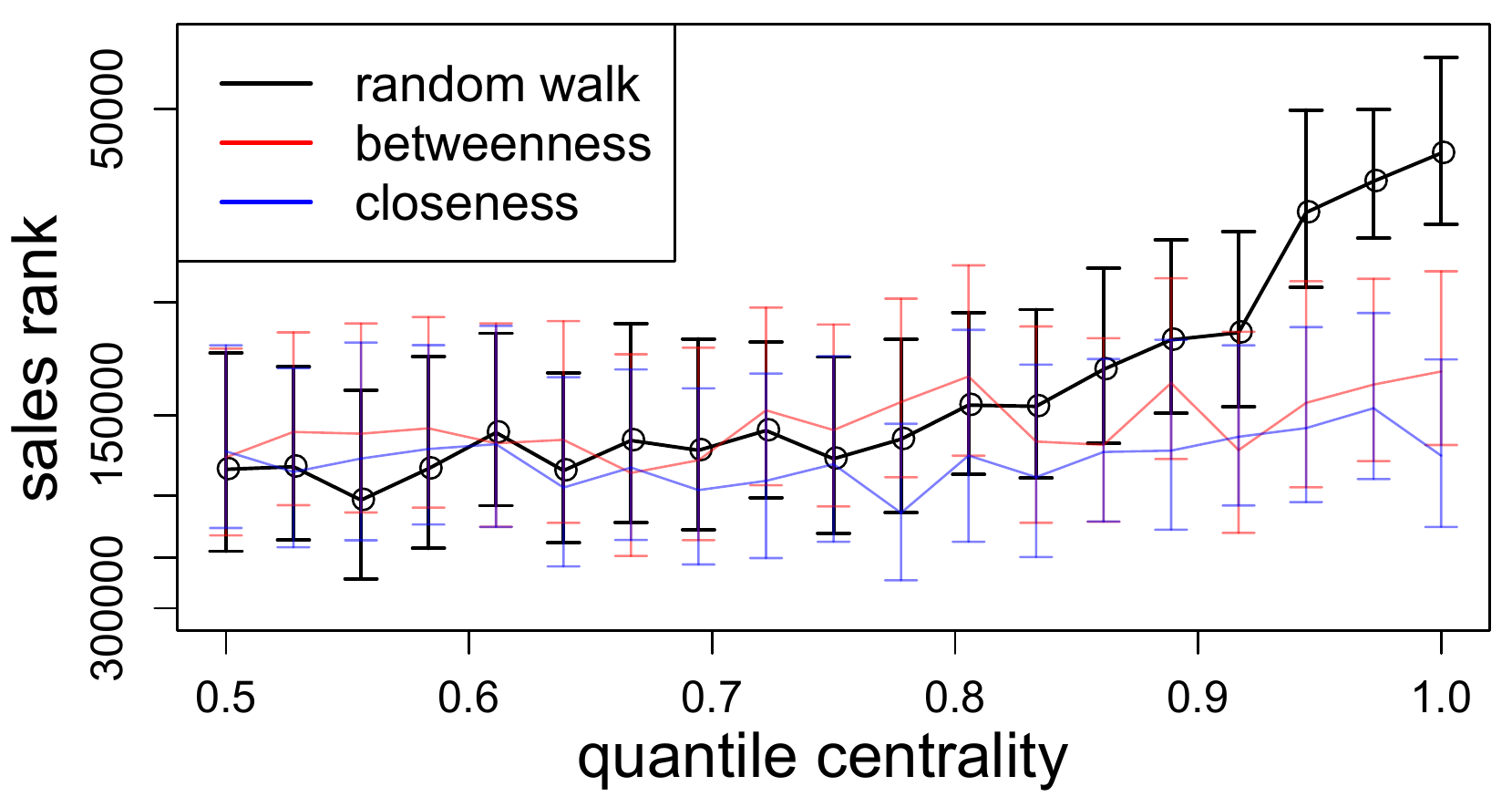}
\caption{Density estimates in the graph correlate well with sales rank, unlike other measures of centrality.}
\label{amazon}
\end{minipage}
\hspace{0.5em}
\begin{minipage}[t]{0.35\textwidth}
\includegraphics[width=\textwidth]{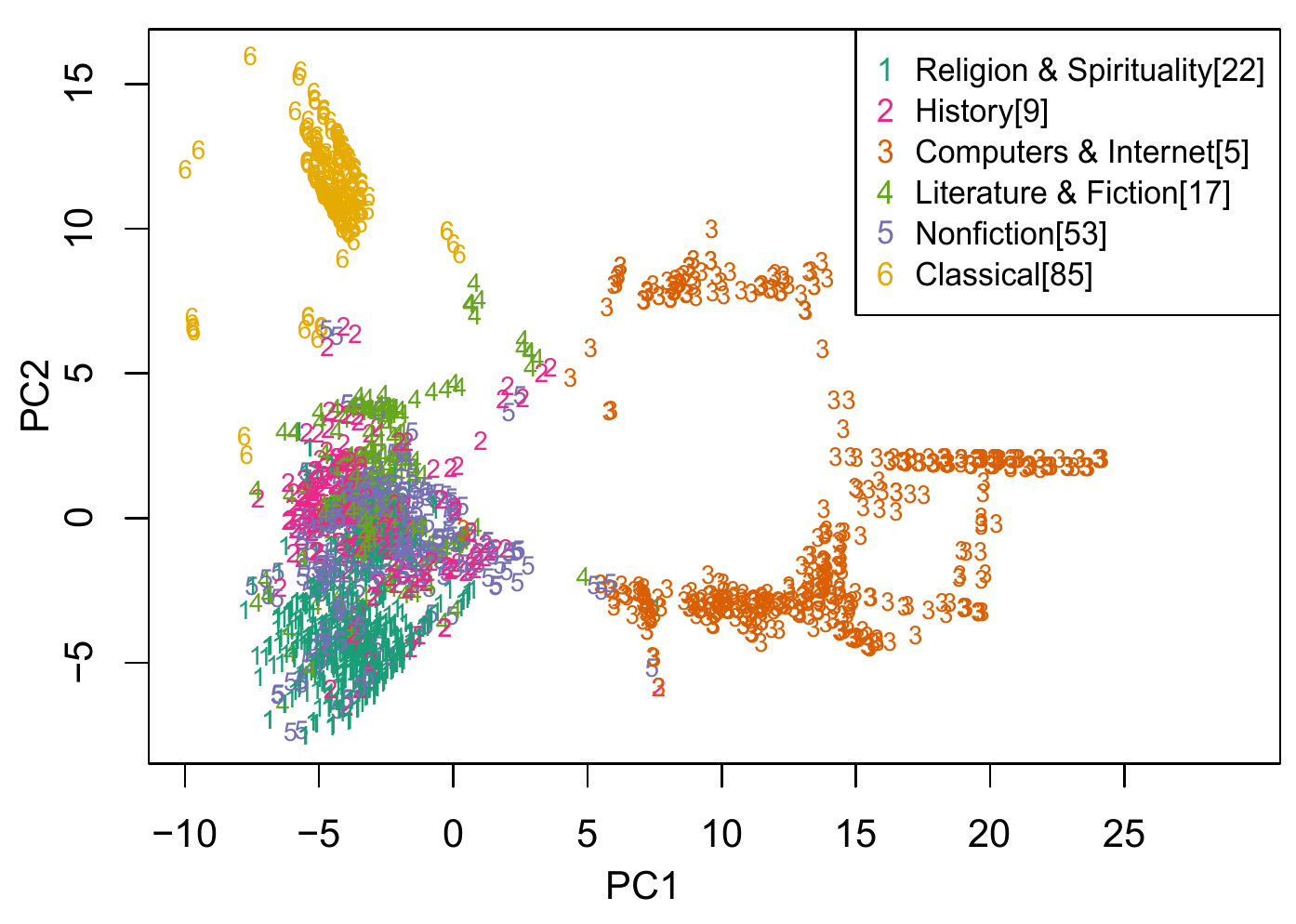}
\caption{Embeddings from estimated distances recover the separation between different product categories.}
\label{amzembed}
\end{minipage}
\end{figure*}

%note enough space
% latex table generated in R 2.15.2 by xtable 1.7-3 package
% Thu Jun 05 03:15:31 2014
\begin{table*}[ht]
\centering
\vspace{-5mm}
\small
\setlength{\tabcolsep}{3pt}
\resizebox{\textwidth}{!} {
\begin{tabular}{llll}
  \hline
 Classics & Literature & Classical music & Philosophy \\
  \hline
 The Prince & The Stranger & Beethoven: Symphonien Nos. 5 \&  & The Practice of Everyday Life \\
 The Communist Manifesto & The Myth of Sisyphus & Mozart: Symphonies Nos. 35-41 & The Society of the Spectacle \\
 The Republic & The Metamorphosis & Mozart: Violin Concertos & The Production of Space \\
 Wealth of Nations & Heart of Darkness & Tchaikovsky: Concerto No. 1/Rac & Illuminations \\
 On War & The Fall & Beethoven: Symphonies Nos. 3 \&  & Space and Place: The Perspectiv \\
   \hline
\end{tabular}
}
\caption{Top 4 clusters formed by mapping each item to its mode (first row). Each group is a coherent genre.}
\label{cluster}
\end{table*}

Finally, we recover density and metric on a real network dataset with no ground truth. We analyzed the largest connected component of the Amazon co-purchasing network dataset \cite{leskovec2007dynamics}.  Each vertex is a product on \url{amazon.com} along with its category and sales rank, and each directed edge represents a co-purchasing recommendation of the form ``person who bought $x$ also bought $y$.'' This dataset naturally fulfills our assumptions of having edges that are asymmetric, where edges represent a notion of similarity in some space.

The items that lie in regions of highest density should be archetypal products for a category, and therefore be more popular. We show that the density estimates using our method with $d=10$ show a strong positive association between density and sales (Figure \ref{amazon}). We found that this effect persisted regardless of choice of $d$. Other popular measures of network centrality such as betweenness and closeness fail to display this effect.

We then attempted metric recovery using our random walk based reconstruction (Figure \ref{amzembed}). For visualization purposes, we used multidimensional scaling on the recovered metric to embed points belonging to categories with at least two hundred items. The embedding shows that our method captures the separation across different product categories. Notably, nonfiction and history have substantial overlap as expected, while classical music CD's and computer science books have little overlap with the other clusters.

Analyzing the modes of the density estimate by clustering each point to its local mode, we find coherent clusters where top items serve as archetypes for the cluster (Table \ref{cluster}). This suggests that there may be a close connection between clustering in a metric space and community detection in network data.
The overall performance of our method on density estimation and metric recovery for the Amazon dataset suggests that when a metric assumption is appropriate, our random walk based metric quantities can be used directly for centrality and cluster estimates on a network.

\section{Conclusions}

We have presented a simple explicit identity linking the stationary distribution of a random walk on a neighborhood graph to the density and neighborhood size.

The density estimator constructed by inverting this identity shows an extremely rapid convergence to the metric $k$-nearest neighbor density estimator across a range of data point count, sparsity level, and distribution type (Figures \ref{fig2},\ref{fig3}). We also generalized the theorem to a large class of graph construction techniques and demonstrated that the choice of construction technique matters little for accuracy (Figures \ref{kernel}).

Our estimator performed well on real-world data, recovering underlying metric information in test data (Figures \ref{dist},\ref{similarity}) and predicting popular Amazon products through density estimates (Figure \ref{amazon}).

There are several open questions left unanswered by our work. Our results required that the graphs be of degree $k = \omega(n^{2/(d+2)}\log(n)^{d/(d+2)})$ rather than the $\log(n)$ required for connectivity. Our simulation results seem to suggest than even near the $\log(n)$ regime our estimator performs nearly perfectly, suggesting that the true degree lower bound may be much lower. 

The close connection of our density estimate to PageRank suggests that combining the latent spatial map with vector space estimates may lead to highly effective and theoretically principled network algorithms.

%%% Local Variables: 
%%% TeX-master: "aistats.tex"
%%% End:

%\subsubsection*{References}

\bibliography{knndraft}
\bibliographystyle{abbrv}

\includepdf[pages=-]{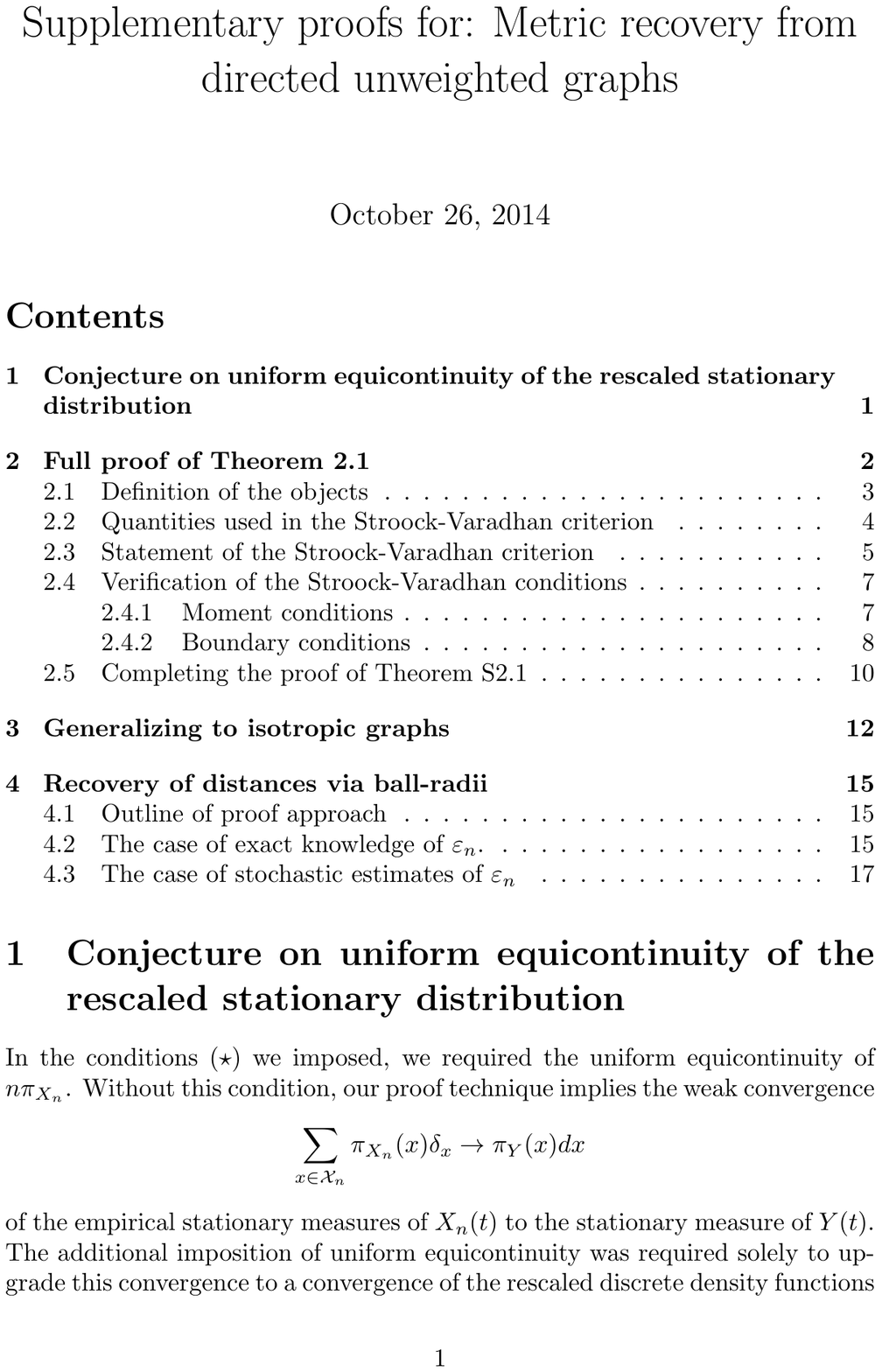}

\end{document}